\documentclass{article} 
\usepackage{iclr2026_conference,times}

\usepackage{amsmath,amsfonts,bm}

\def\eqref#1{equation~\ref{#1}}

\DeclareMathAlphabet{\mathsfit}{\encodingdefault}{\sfdefault}{m}{sl}
\SetMathAlphabet{\mathsfit}{bold}{\encodingdefault}{\sfdefault}{bx}{n}

\usepackage[colorlinks=true,citecolor=red]{hyperref} 
\usepackage{url}      
\usepackage{wrapfig}
\usepackage{booktabs}       
\usepackage{amsfonts}       
\usepackage{nicefrac}       
\usepackage{microtype}      
\usepackage{xcolor}         
\usepackage{amsthm}
\usepackage{amsmath}
\usepackage{pifont}

\usepackage{stmaryrd}
\usepackage{amssymb}

\usepackage{thmtools}
\usepackage{thm-restate}
\usepackage{graphicx}
\usepackage{placeins}
\usepackage[export]{adjustbox}
\usepackage{placeins}

\usepackage{subcaption}
\usepackage{pifont}

\usepackage{multirow}
\usepackage{wasysym}
\theoremstyle{plain}
\usepackage{siunitx}
\usepackage{paralist}
\usepackage{fancyhdr}
\usepackage{subcaption}

\usepackage{listings}
\lstdefinelanguage{json}{
  basicstyle=\ttfamily\small,
  showstringspaces=false,
  breaklines=true,
  morestring=[b]",
  stringstyle=\color{black},
  morecomment=[l]{//},
  morecomment=[s]{/*}{*/},
  morekeywords={true,false,null}
}
\usepackage{cleveref}
\usepackage{mathtools}
\usepackage{bm}
\newtheorem{theorem}{Theorem}
\newtheorem{lemma}{Lemma}
\newtheorem{corollary}{Corollary}

\usepackage[dvipsnames]{xcolor}

\usepackage{enumitem}
\usepackage{tcolorbox}
\tcbuselibrary{listings,skins,breakable}
\usepackage[T1]{fontenc}

\definecolor{ecologygreen}{RGB}{20, 120, 120}
\definecolor{lightblue}{RGB}{240, 248, 248}
\definecolor{codebg}{RGB}{248, 248, 248}
\definecolor{prompt1-frame}{RGB}{137, 137, 90}
\definecolor{pastelblue}{RGB}{173, 216, 230}

\newcommand{\E}{\mathbb{E}}
\newcommand{\KL}{\mathrm{KL}}
\newcommand{\TV}{\mathrm{TV}}

\definecolor{takeawaybg}{HTML}{F6F9E7} 
\newtcolorbox{ta}[2][]{%
  enhanced,
  breakable,
  colback=takeawaybg,
  colframe=gray,
  boxrule=1.2pt,
  arc=3mm,            
  outer arc=3mm,
  left=4mm, right=4mm, top=4mm, bottom=4mm,
  fonttitle=\bfseries,
  coltitle=white,
  title={#2},
  attach boxed title to top left = {xshift=3mm, yshift=-4mm},
  boxed title style = {
    enhanced,
    colback=olive,
    colframe=black,
    boxrule=0pt,         
    arc=2mm, outer arc=2mm,
    left=1mm, right=1mm, top=1mm, bottom=1mm,
  },
  varwidth boxed title,      
  #1
}

\newcommand{\tv}[2]{\big\| #1 - #2 \big\|}
\newcommand{\kl}[2]{D_{\mathrm{KL}}\!\left(#1\,\|\,#2\right)}

\title{Spinning Straw into Gold: Relabeling LLM Agent Trajectories in Hindsight for Successful Demonstrations}

\iclrfinalcopy
\author{
\parbox{\textwidth}{\centering
{\bfseries
Zichao Li$^{1}$\thanks{Work done during an internship at Adobe Research.} , Gang Wu$^{2}$, Zichao Wang$^{2}$, Ruiyi Zhang$^{2}$,\\
Wanrong Zhu$^{2}$, Ryan A.\ Rossi$^{2}$, Vlad I.\ Morariu$^{2}$, Jihyung Kil$^{2}$}
\\[3pt]
{\normalfont
$^{1}$Mila, McGill University \qquad $^{2}$Adobe Research}
\\[2pt]
{\normalfont\texttt{zichao.li@mail.mcgill.ca, jkil@adobe.com}}
}}

\begin{document}

\maketitle

\begin{abstract}
Large language model agents operate in partially observable, long-horizon settings where obtaining supervision remains a major bottleneck. We address this by utilizing a source of supervision overlooked in existing post-training methods: unintended yet successful goals embedded within agent rollouts. Specifically, we introduce Hindsight Supervised Learning (HSL), where an auxiliary LLM reviews each completed trajectory and relabels it with all of the natural-language goals the agent actually achieved. HSL then pairs the trajectory with its relabeled goals and uses these pairs for additional fine-tuning. To mitigate suboptimality in the relabeled data, we propose two learning techniques for HSL, irrelevant-action masking and sample reweighting. Our experiments show that HSL is flexible and compatible with existing post-training pipelines. It improves both SFT and DPO, with larger gains on long-horizon tasks with more diverse goal spaces. Moreover, HSL is sample-efficient: on ALFWorld, it surpasses baselines trained on the full dataset while using only one quarter of the ground-truth demonstrations.


\end{abstract}

\section{Introduction}
Large language model (LLM) agents extend foundation models~\citep{bommasani2021opportunities} by enabling them to operate within interactive environments, including but not limited to autonomous web browsing and form completion~\citep{zheng2024gpt}, tool‑augmented question answering~\citep{zhuang2023toolqa}, software engineering~\citep{jimenez2023swe}, strategic decision‑making in simulators and games~\citep{fan2022minedojo}, and high‑level control of embodied robots~\citep{li2024embodied}. Such agents are increasingly important because they bridge the gap between raw language models and practical, interactive intelligence. However, building effective LLM agents remains a challenging task. In most tasks, the dynamics of the underlying high-dimensional environment are complex and hidden, and the observations available to the agent reveal only partial information about it. Even so, the agent should plan over long horizons and choose actions whose effects are uncertain.

A variety of methods have been explored to train LLM agents, including supervised fine-tuning (SFT), direct preference optimization (DPO)~\citep{rafailov2023direct}, and other reinforcement learning (RL) techniques~\citep{schulman2017proximal, liu2025symagent}. Among them, SFT is the most widely used, but it relies heavily on expert demonstrations, which are costly to collect and often lack diversity. More importantly, it does not fully leverage the trajectories generated by the agent itself. In contrast, RL can, in principle, exploit such data through trial and error. However, it requires discovering high-return trajectories before learning can proceed, which is challenging in long-horizon tasks with sparse rewards. The difficulty is further intensified when the agent’s interaction with the environment is restricted, for example, by safety or privacy concerns in embodied or web agents~\citep{tur2025safearena}.

The intuition behind this work, inspired by hindsight experience replay~\citep{andrychowicz2017hindsight} in goal-conditioned RL, is that an LLM agent may accomplish unintended but valid tasks regardless of whether it completes the instructed one.
Take~\Cref{fig:concept} as a conceptual example. When instructed to reach the blue flag, the agent might miss the target entirely and instead arrive at the red flag. Alternatively, it might reach the yellow flag before eventually reaching the blue one. 
Therefore, it is feasible to relabel the agent’s trajectories with the unintended goals, such as the red or yellow flags, and then extract such unintended achievements from the agent’s experience, turning them into successful demonstrations that can be consolidated for learning and performance gains. 
We assume that the relabeling task is easier than solving the original task, as it can be performed after the full trajectory is collected and does not require predicting environment dynamics, which are often among the hardest aspects of LLM agent tasks.
\begin{wrapfigure}{r}{0.5\linewidth}
    \centering
    \includegraphics[width=\linewidth]{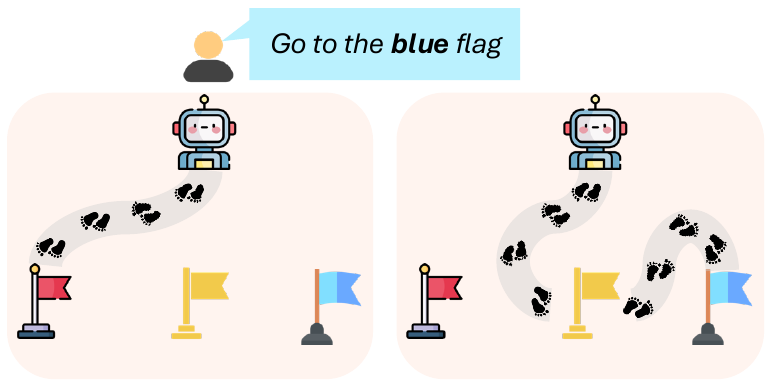}
    \caption{Toy example. Given the instruction ``Go to the blue flag'', the agent may instead reach the red flag or visit the yellow flag along the way. While these trajectories are incorrect or not optimal for the original instruction, they can be relabeled as successful demonstrations for the goals the agent actually achieved.}
\label{fig:concept}
\end{wrapfigure}
Instead, the problem reduces to inferring which tasks were actually accomplished by reasoning over the observations in hindsight, a perception and reasoning challenge that LLMs are well-suited to handle.


In view of this, we propose \textbf{H}indsight \textbf{S}upervised \textbf{L}earning (HSL) (\autoref{fig:main}). 
HSL uses an auxiliary LLM to relabel agent-generated trajectories with the goals actually achieved by the agent and fine-tunes on these relabeled examples with SFT. Since the original trajectories are generated under different instructions, they may be suboptimal for the relabeled goals. We thus introduce two additional learning strategies, irrelevant-action masking and relabeled demonstration reweighting. These strategies help agents learn more effectively to follow natural language instructions and reproduce successful behaviors.



In sum, our contributions are threefold:
\begin{itemize}
    \item We propose a novel post-training method for LLM agents, Hindsight Supervised Learning (HSL), which iteratively mines successful demonstrations by relabeling agent trajectories and fine-tunes the agent on the relabeled data.
    \item Based on operational insights from our theoretical analysis, we introduce two simple yet effective learning methods, irrelevant-action masking and sample reweighting, which are further validated through ablation studies.
    \item Our experiments on ALFWorld~\citep{shridhar2020alfworld}, PlanCraft~\citep{dagan2024plancraft} and WebShop~\citep{yao2022webshop} demonstrate that 
    (1) HSL is \textbf{compatible} with existing post-training methods such as SFT and DPO, yielding notable improvements (e.g., by $8\%-32\%$ success rate in ALFWorld),
    (2) HSL is sample-efficient. For instance, with less than a quarter of ground-truth demonstrations, it outperforms baseline methods trained on the full dataset.
\end{itemize}




\section{Related Work}
\subsection{LLM Agents}
Early language agents were developed for artificial text-based interactive environments~\citep{chevalier2018babyai}, where agents follow natural language instructions to change states and achieve goals. This shift in focus transformed NLP from static prediction to sequential decision-making. With the rapid advancement of language models, LLM agents now operate in more realistic and complex domains. These include embodied agents~\citep{li2024embodied}; GUI agents~\citep{yao2022webshop, zhouwebarena, xie2024osworld}; and code agents~\citep{jimenez2023swe}.

Post-training methods for these agents follow two main approaches. Supervised fine-tuning (SFT) on demonstrations is effective but costly and limited by coverage. Preference- and RL-based training, such as PPO~\citep{schulman2017proximal, hu2025agents}, DPO~\citep{rafailov2023direct, song2024trial}, aims to improve agents with weaker supervision, yet often struggles under sparse, delayed feedback and long horizons. To mitigate sparsity, recent approaches synthesize feedback using heuristics or learned judges~\citep{da2025agentrlvrtrainingsoftwareengineering} and conduct intensive interactions with the environment to obtain denser intermediate rewards~\citep{xiong2024watch}.

By contrast, our work proposes an alternative approach: maximizing the number of successful demonstrations through agent experiences. Nevertheless, this approach is complementary to existing learning algorithms, and we demonstrate that it can provide an additive improvement when combined with different post-training methods.

Another line of research synthesizes large-scale training data with heuristics~\citep{xu2024agenttrek}, some of which also involve generating a label conditioned on agent trajectories~\citep{murty2024bagel, sun2024genesis}. While these methods assume a single goal or instruction per trajectory, we propose to mine all achieved goals within the trajectory. More importantly, these data synthesis methods generate training data using a different model from the target LLM agent, which is fine-tuned with the synthesized dataset fixed. By contrast, we continually refresh the relabeled buffer with the same target agent. This on-policy, continuously updated supervision aligns the hindsight distribution with the agent’s evolving occupancy, improves coverage where the expert policy has support, and mitigates the drift induced by fixed synthetic datasets. As shown in \Cref{sec:learn} and \Cref{sec:exp}, this design both tightens the expert–agent gap in analysis and translates to consistent empirical gains over fixed-data synthesis baselines.
\subsection{Goal Relabeling for Goal-conditioned RL}
The idea of goal relabeling was first proposed in hindsight experience replay (HER) for robotic tasks~\citep{andrychowicz2017hindsight}. HER relabels episodes with goals that are actually achieved, enabling efficient learning from sparse, binary rewards. Since then, HER has become a workhorse for multi-goal RL and robotics. Several extensions refine how experiences are selected and weighted, such as countering bias in relabeled experiences by applying more aggressive hindsight rewards~\citep{lanka2018archer}, scheduling replay toward experiences closer to actual goals while maintaining diversity~\citep{fang2019curriculum},  
relabeling trajectories explicitly assemble success from multiple failures~\citep{fang2018dher}. Later, ~\cite{cideron2020higher} extends this idea to language‑conditioned agents, learning an instruction generator to map the reached states to text-based instructions. While this approach, based on HER, trains the agent on the relabeled experiences with the original offline RL objective, GCSL~\citep{ghosh2019learning} instead applies supervised learning to relabeled successes. Nonetheless, a complementary robotics line learns from play data by retrospectively specifying goals within unlabeled teleoperation and training goal‑conditioned policies~\citep{lynch2020learning, cui2022play}.

Our work differs from prior goal-conditioned RL research using goal relabeling techniques in several ways. First, most existing methods assume the agent has direct access to the whole state of the environment. In contrast, we address a more complex setting where the agent only receives partial observations, so goal relabeling must be inferred from multiple key observations. Second, methods such as~\citet{cideron2020higher, ghosh2019learning} typically relabel only the final goal of failed trajectories. In contrast, we relabel all goals achieved along the trajectory, even when the instructed goal is eventually reached. Third, while HER and its variants train agents using offline RL, we apply SFT and propose two additional techniques to drive the agent towards optimal policy. Finally, our approach focuses on enhancing LLM agents by leveraging the reasoning abilities of LLM for the relabeling process.

More recently, ~\cite{wisdom} proposed rewriting queries to better align with LLM-generated answers for BigBench reasoning tasks~\citep{srivastava2023beyond}, such as logical deduction and word sorting. Although the high-level idea is related, our work focuses on agentic POMDP tasks and introduces distinct techniques.
\section{Methods}
In this section, we present Hindsight Supervised Learning (HSL), a new post-training method for LLM agents. We begin with the background and problem formulation, then outline the three main stages of HSL: trajectory collection, relabeling trajectories with an auxiliary LLM to produce successful demonstrations, and learning from these relabeled demonstrations. We then explain the relabeling process in detail, the core component of HSL, which consists of two steps: goal identification and action relevance labeling. Finally, we present two additional learning techniques that further optimize the agent, relevance-based loss masking and relabeled demonstration reweighting.
\begin{figure}\label{fig:framework}
    \centering
        \includegraphics[width=\linewidth]{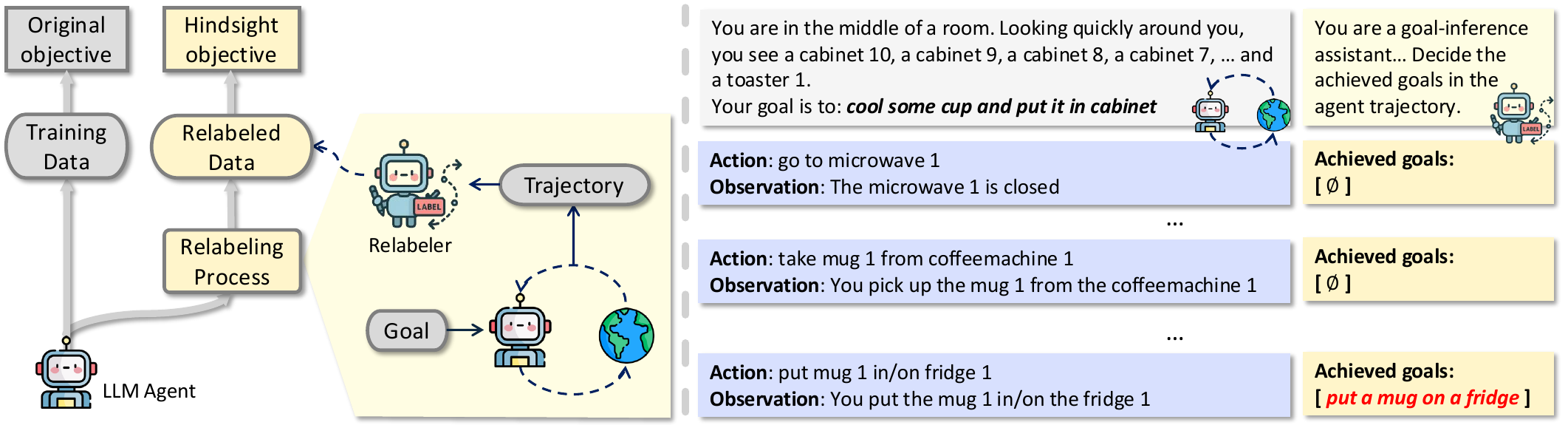}
        \caption{Left: Overview of the existing training pipeline with HSL. Right: Example of the relabeling process. The relabeler assigns new goals to the agent's trajectory based on the agent's achievements.}
        \label{fig:main}
\end{figure}
\subsection{Background}\
We model an LLM agent as operating in a partially observable Markov decision process (POMDP). At each step $t$, the environment produces an observation $o_t$ that reveals only part of the underlying state $s_t$. The transition from the high-dimensional $s_t$ to $s_{t+1}$ induced by action $a_t$ is assumed to be unknown. Conditioned on the natural language instruction $I$, the trajectory history $\tau_{t-1}=\{(o_{i}, a_{i})\}_{i=1}^{t-1}$, and the new observation $o_{t}$, the agent selects an action $a_t$ that maximizes its learned conditional distribution $\pi_{\theta}(a_t|\tau_{t-1}, o_t, I)$. An reward function $R_T\left(s_{T}, g(I)\right)$ measures how well the final state achieves the goal specified by $I$. We also assume access to a general textual description of the goal space $\mathcal{G}$. The task is to find the best agent that maximize the expected reward: $\max_\theta \mathbb{E}_{\tau\sim\pi_{\theta}} \left[R_T\right]$. Typically, the agent is trained using offline ground-truth demonstrations. A simple and straightforward way of learning from demonstrations is supervised fine-tuning (SFT).
In addition, one may use RL methods such as PPO~\citep{schulman2017proximal} and GRPO~\citep{shao2024deepseekmath}.
\subsection{Framework of Hindsight Supervised Learning} 
In general, our proposed HSL framework adds a parallel branch with three key steps: trajectory collection, relabeling, and learning. This branch can be combined with arbitrary existing training pipelines, including SFT and DPO. We describe each step below.

\paragraph{Trajectory collection.}  At each training step, we sample $b$ goals expressed in natural-language instructions $I \sim D_{\text{train}}$ from the training set. The agent then rolls out by following $I$ through interaction with the environment. Specifically, the agent selects an action $a_t \sim \pi_{\theta}(a|\tau_{t-1}, o_t, I)$ and receives the next observation $o_{t+1}$. This process continues until the task terminates or the maximum step limit $T$ is reached. We then collect trajectories $(\tau_T, o^*)$, including the final observation $o^*$, for relabeling.

\paragraph{Relabeling process.}  We use an auxiliary LLM $M$ to revisit each collected trajectory and identify the actual $K$ goals $(K \geq 0)$, represented as instructions $\{I'_1,\ldots,I'_k\}$, that the agent successfully achieved. From these, we extract pairs of relabeled instructions and their corresponding trajectories $(I', \tau')_k$ as successful demonstrations. The relabeling process is described in detail in \Cref{sec:relabel}. The resulting demonstrations are stored in a dynamic buffer $D'$ of size $N$.

\paragraph{Learning from relabeled demonstrations.}  We improve the LLM agents with the relabeled demonstrations. At each optimization step, we randomly sample a batch of relabeled demonstrations from $D'$ and calculate $\mathcal{L_{\theta}^{\text{HSL}}}$. To further improve agent optimality, we propose two techniques: irrelevant action masking and demonstration reweighting. We detail both of them and the concrete loss function $\mathcal{L_{\theta}^{\text{HSL}}}$ on \Cref{sec:learn}.

This HSL training pipeline on the relabeled demonstrations $D'$ can be combined with a wide range of existing post-training methods, including SFT and DPO, on $D_{\text{train}}$. Notably, the relabeling process avoids reliance on any ground-truth actions or reward signal, allowing the relabeling model to operate in an almost unsupervised manner.

\subsection{Relabeling for Successful Demonstrations in Hindsight}\label{sec:relabel}
Next, we provide a detailed explanation of the relabeling process, which is the core component of HSL. It consists of two steps: goal identification and action relevance labeling.

\paragraph{Goal identification.} 
While it is hard to foresee which goals will be achieved after an action $a_t$, reasoning from the resulting observation $o_{t+1}$ makes the task more manageable. Typically, $o_{t+1}$ reveals whether $a_t$ had any effect on the environment and, if so, what that effect was. One can also infer the long-term impact of $a_t$ by examining subsequent observations.

We therefore employ an external LLM $M$ to revisit each trajectory and infer a running list of achieved goals represented as instructions $\mathcal{I'}={ I'_1, \ldots, I'_K}$ along the trajectory. Although $M$ could be trained or adapted for this task, we instead rely on its zero-shot reasoning ability and prompt it to infer the achieved goals in one pass: $\{I'_1,\ldots,I'_K\}= M_{\text{inst}}(\tau_T, o^*)$, where \texttt{inst} denotes the system prompt (provided in \Cref{appendix:relabel-prompt}) describing the space of valid goals, and $o^*$ is the final observation.

\autoref{fig:main} (right) illustrates a concrete example: the agent has missed the original goal \textit{cool some cup and put in cabinet}, but achieved an uninstructed one \textit{put a mug in a fridge}. In some cases, the agent may have multiple achievements during the trial, which can be independent of one another.
If the agent fails to achieve any meaningful goal in $\tau_T$, we discard the trajectory. We then extract each pair of a relabeled goal and its corresponding trajectory segment $(\tau_{S(I'_k)}, I'_k)$ as a successful demonstration, where $S(I'_k)$ is the step where $I'_k$ is newly accomplished. 

\paragraph{Action relevance labeling.} Since the trajectory $\tau_{S(I'_k)}$ is originally collected for instruction $I$, some actions in $\tau_{S(I'_k)}$ may be irrelevant to achieving the relabeled goal $I'k$. For each pair $(\tau_{S(I'_k)}, I'_k)$, we reuse $M$ to infer a label $z_{u\in {1,\ldots,S(I'_k)}}$ for each action $a_u$ in $\tau_{S(I'_k)}$, indicating its relevance to $I_k'$: $z_{1:S(k)}= M_{\text{relevance}}(\tau_{S(k)}, o_{S(k)+1}, I_k')$,
where \texttt{relevance} is the system prompt shown in \Cref{appendix:relabel-prompt}. The inferred $z_{1:S(k)}$ is subsequently used to train the LLM agents with the relabeled demonstrations.

While existing hindsight generation in RL mostly relabels a single goal on the failed trajectory~\citep{cideron2020higher, ghosh2019learning}, our relabeling model $M$ identifies all valid instructions achieved in $\tau$. This design is motivated by two reasons. First, even in successful trajectories, the agent may accomplish additional tasks at intermediate steps. We find that the LLM agent benefits from learning these uninstructed tasks. Second, HSL is less dependent on explicit reward signals, which are often noisy or difficult to obtain in LLM agent tasks such as web-based agents.

At the end of relabeling, we extract each triplet of $(\tau_{S(k)}, I_k', z_{1:S(k)})$ and append it to the relabeled demonstration buffer $D'$.


\subsection{Theoretical Analysis and Learning Techniques} \label{sec:learn}
We would like to analyze how learning from the relabeled demonstrations bridges the gap between the LLM agent and the optimal (expert) policy $\pi^*$. We define the \emph{hindsight expert} induced by agent trajectories and the relabeling model $M$ as: 
\begin{equation}
\pi_H(a \mid \tau, o, I')
= \Pr_{\pi_\theta}\!\big[
a_{|\tau|+1}=a \,\big|\,
\tau_{|\tau|}=\tau,\;
o_{|\tau|+1}=o,\;
S_M(I') = |\tau|+1
\big]
\end{equation}
recall that $S_M(I')$ denotes that the first timestep $I'$ is achieved.
Accordingly, we define $\kappa_{E\!\leftarrow H}$ as the occupancy coverage ratio of $\pi^*$ and $\pi_H$,   and $\delta_E$ as the discrepancy between $\pi^*$ and $\pi_H$:
\[
\kappa_{E\!\leftarrow H} \;\triangleq\; \max_{(\tau, o, I)} \frac{\rho_{\pi^*}(\tau, o, I)}{\rho_{\pi_H}(\tau, o, I)}\in[1,\infty);\quad \delta_E = \E_{(\tau, o, I)\sim \rho_{\pi^*}}\!\left[\,\tv{\pi_H(\cdot\mid \tau, o, I)}{\pi^\star(\cdot\mid \tau, o, I)}\,\right],
\]
where $\rho_\pi$ is the occupancy measure of $\pi$ over $(\tau, o, I)$.

Our target is to minimize the following discrepancy between agent $\pi_\theta$ and optimal policy $\pi^*$:
\[
\Delta(\theta)
\;\triangleq\;
\E_{(\tau,o,I)\sim \rho_{\pi_{\theta}}}\!
\left[\,\tv{\pi_\theta(\cdot\mid \tau, o, I)}{\pi^\star(\cdot\mid \tau, o, I)}\,
\right].
\]

\begin{theorem}[Upper Bound of Expert-Agent Discrepancy]
\label{thm:bound}
Under the setup above,
\begin{equation}
\begin{aligned}
\label{eq:bridge}
\Delta(\theta)
\; &\le\;
 \E_{(\tau, o, I)\sim \rho_{\pi^*}}\Big[\,\kl{\pi^\star(\cdot\mid \tau, o, I)}{\pi_\theta(\cdot\mid \tau, o, I)}\,\Big]
\;\\ &+ 
C_T\,\kappa_{E\!\leftarrow H}\,
\sqrt{\tfrac{1}{2}\,\E_{(\tau, o, I')\sim \rho_{\pi_{H}}}\Big[\,\kl{\pi_H(\cdot\mid \tau, o, I')}{\pi_\theta(\cdot\mid \tau, o, I')}\,\Big]}
\;+\;
C_T\,\delta_E\,.
\end{aligned}
\end{equation}
where $C_T=2(T-1)$. 
\end{theorem}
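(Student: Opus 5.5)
The plan is to split $\Delta(\theta)$ into a term measured under the expert's occupancy plus a distribution-shift correction. Then bound the correction with a simulation-lemma argument, and route the expert-occupancy term through the hindsight expert $\pi_H$ using the coverage ratio $\kappa_{E\leftarrow H}$. Write $f_\theta(\tau,o,I)=\tv{\pi_\theta(\cdot\mid\tau,o,I)}{\pi^\star(\cdot\mid\tau,o,I)}\in[0,1]$ (total variation). First,
\[
\Delta(\theta)=\E_{\rho_{\pi^*}}[f_\theta]+\big(\E_{\rho_{\pi_\theta}}[f_\theta]-\E_{\rho_{\pi^*}}[f_\theta]\big),
\]
and since $f_\theta\le 1$, the bracket is at most $\tv{\rho_{\pi_\theta}}{\rho_{\pi^*}}$ (up to the TV normalization convention).

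\textbf{Occupancy difference.} I will bound this with the standard performance-difference / simulation lemma over a horizon of $T$ steps. The occupancy over histories at step $t$ differs by at most the accumulated per-step action discrepancies along the expert's rollout. Averaging over $t$ and counting at most $T-1$ transitions gives
\[
\tv{\rho_{\pi_\theta}}{\rho_{\pi^*}}\le (T-1)\,\E_{\rho_{\pi^*}}[f_\theta]\cdot 2 .
\]
The factor $2$ absorbs the TV convention. Care is needed so that the constant comes out as $C_T=2(T-1)$ multiplying an expert-occupancy average. So $\Delta(\theta)\le \E_{\rho_{\pi^*}}[f_\theta]+C_T\,\E_{\rho_{\pi^*}}[f_\theta]$.

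\textbf{Bounding the two copies differently.} For the unscaled copy, use $\mathrm{TV}\le\sqrt{\KL/2}\le$ (crudely) $\KL$ when $\KL\ge 1/2$, and otherwise $\mathrm{TV}\le 1$. The cleaner route is Pinsker followed by $\sqrt{x/2}\le x+1/8$. If the constant is troublesome, I would instead keep this term as TV and note that the theorem's first term dominates it after a slight loosening; this is the step I'd check most carefully. For the $C_T$-scaled copy, use the triangle inequality through $\pi_H$:
\[
f_\theta\le \tv{\pi_\theta}{\pi_H}+\tv{\pi_H}{\pi^\star}.
\]
The expectation of the second term under $\rho_{\pi^*}$ is exactly $\delta_E$. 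For the first term, change measure:
\[
\E_{\rho_{\pi^*}}\big[\tv{\pi_\theta}{\pi_H}\big]\le\kappa_{E\leftarrow H}\,\E_{\rho_{\pi_H}}\big[\tv{\pi_\theta}{\pi_H}\big],
\]
which uses nonnegativity and the definition of $\kappa_{E\leftarrow H}$ as a max density ratio. Then apply Pinsker pointwise and Jensen (concavity of the square root) to get $\kappa_{E\leftarrow H}\sqrt{\tfrac12\E_{\rho_{\pi_H}}[\KL(\pi_H\|\pi_\theta)]}$. This has the right KL direction, matching the SFT objective on relabeled data.

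\textbf{Main obstacle.} The delicate step is the simulation lemma with the exact constant $2(T-1)$ and the requirement that the shift term is measured under $\rho_{\pi^*}$ rather than $\rho_{\pi_\theta}$. I would do a telescoping hybrid argument: define policies that follow $\pi^*$ for the first $t$ steps and $\pi_\theta$ afterwards. Each swap then contributes the TV of the action distributions at step $t$ under the expert's history distribution. Only $T-1$ swaps affect later histories, since the final action does not alter any subsequent observed context. Combining the pieces yields \eqref{eq:bridge}.
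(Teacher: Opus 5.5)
Your outline matches the paper's proof step for step: the same add-and-subtract decomposition, the simulation inequality with $(\mu,\nu)=(\pi_\theta,\pi^\star)$ so the shift is measured under $\rho_{\pi^\star}$, the triangle inequality through $\pi_H$, the change of measure by $\kappa_{E\!\leftarrow H}$, and Pinsker plus Jensen. Your handling of the $C_T$-scaled copy and your hybrid argument are sound.

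The gap is the step you flagged. None of your three options bounds the unscaled copy $\E_{\rho_{\pi^\star}}[f_\theta]$ by $\E_{\rho_{\pi^\star}}[\KL(\pi^\star\|\pi_\theta)]$, because total variation is first order in a perturbation while KL is second order. For $\pi^\star=\mathrm{Bern}(1/2)$ and $\pi_\theta=\mathrm{Bern}(1/2+\epsilon)$ you get $\TV=\epsilon$ but $\KL(\pi^\star\|\pi_\theta)=-\tfrac12\log(1-4\epsilon^2)\approx2\epsilon^2$. So the claim that the theorem's first term ``dominates'' the TV copy is false. The case split only gives $\TV\le1$, not $\TV\le\KL$, when $\KL<1/2$. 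Pinsker followed by $\sqrt{x/2}\le x+1/8$ leaves an additive $1/8$ that the statement does not contain, and no inequality $\sqrt{x/2}\le c\,x$ can hold near $x=0$. The paper's proof treats this copy essentially as in your option (b): it applies $\sqrt{y}\le y+\tfrac14$ and declares the constant ``absorbed'', which is the same unjustified step, so mirroring it does not close the gap. The problem is not cosmetic. For $T=1$ one has $C_T=0$ and $\rho_{\pi_\theta}=\rho_{\pi^\star}$, and the Bernoulli example violates the statement outright.

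The missing idea is that the factor $2$ in $C_T$ is slack, and that slack must absorb the unscaled copy; your remark that the $2$ ``absorbs the TV convention'' is where it gets wasted. Use $\TV=\tfrac12\lVert\cdot\rVert_1$ throughout, which is the convention forced by the $\sqrt{\tfrac12\,\cdot}$ form of Pinsker in the statement. Since $f_\theta\in[0,1]$, the bracket is at most $\TV(\rho_{\pi_\theta},\rho_{\pi^\star})$. Your hybrid argument gives, conditionally on $I$, $\TV(d^t_{\pi_\theta},d^t_{\pi^\star})\le\sum_{s<t}\E_{d^s_{\pi^\star}}[f_\theta]$, where $d^t_\pi$ is the law of $x_t=(\tau_{t-1},o_t)$. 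Averaging over $t$ and $I$, and using $\sum_{t=1}^{T}\sum_{s<t}a_s=\sum_{s=1}^{T-1}(T-s)a_s\le(T-1)\sum_{s=1}^{T}a_s$ for $a_s\ge0$, gives $\E_{\rho_{\pi_\theta}}[f_\theta]-\E_{\rho_{\pi^\star}}[f_\theta]\le(T-1)\,\E_{\rho_{\pi^\star}}[f_\theta]$. Hence $\Delta(\theta)\le T\,\E_{\rho_{\pi^\star}}[f_\theta]\le C_T\,\E_{\rho_{\pi^\star}}[f_\theta]$ whenever $T\ge2$. Now run your triangle, change-of-measure, Pinsker and Jensen chain on this single copy to obtain $C_T\kappa_{E\!\leftarrow H}\sqrt{\tfrac12\E_{\rho_{\pi_H}}[\KL(\pi_H\|\pi_\theta)]}+C_T\delta_E$. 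The theorem's first KL term is then just a nonnegative addition. With this change your argument proves the statement for all $T\ge2$, which the paper's argument as written does not quite do.
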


\Cref{appendix:proof-thm1} provides the detailed proof. The key takeaways of \Cref{thm:bound} are as follows. The upper bound on the discrepancy between $\pi_\theta$ and $\pi^*$ decreases when we apply SFT to both ground-truth demonstrations (first term) and relabeled demonstrations (second term). The gap shrinks further as the occupancy coverage and the optimality of $\pi_H$ improve. In practice, this can be achieved by using a stronger relabeling model and, more importantly, by continually updating the relabeled buffer $D'$ so that $\pi_H$ is constructed on-policy. As the agent succeeds more often, $\rho_{\pi_H}$ concentrates its mass where $\rho_{\pi^*}$ has mass. We also prove a corollary in \Cref{app:corollaries} showing that SFT on relabeled demonstrations strictly tightens the expert–agent discrepancy (\Cref{eq:bridge}). Guided by these insights, we train on relabeled demonstrations with SFT and introduce two simple, effective learning techniques.

\paragraph{Demonstration reweighting.} 
Some instructions $I'$ are inherently easier and thus appear disproportionately in $D'$, which may bias the agent toward solving only trivial tasks, leading to a bad coverage $\kappa$. To mitigate this issue, we sample demonstrations $d \in D'$ with a weight $w_d$ that prioritizes learning from trajectories that solve more difficult tasks more optimally. Concretely, $w_d$ is calculated by: $w_d = (\frac{n_d}{T_d})^\alpha \cdot n_d$,
where $n_d$ is the number of actions associated with the relevance label $z_u=1$, and $T_d$ is the total number of actions in $d$. The ratio $\frac{n_d}{T_d}$ reflects how optimally the task is solved, while $n_d$ serves as a proxy for task difficulty. The hyperparameter $\alpha$ balances these two factors.

\paragraph{Irrelevant action masking.}
Because the original trajectories are collected under instructions different from the relabeled $I'$, naively applying SFT on all actions in each trajectory may imitate a sub-optimal hindsight expert with large $\delta_E$.
Therefore, we mask out the loss for actions labeled irrelevant ($z_{t}=0$). And hence, the training objective on relabeled demonstrations is defined as:
\begin{equation}
    \mathcal{L}_\theta^{D'} = \mathbb{E}_{d'=(\tau, I', z)\sim P(\cdot)}\left[\frac{1}{T}\sum_{t=1}^T- z_{t} \cdot \log P_\theta(a_t|\tau_{t-1}, o_t, I')\right],
\end{equation}
where $P(d')=\frac{w_d'}{\sum_{d\in D'}w_{d}}$.
Suggested by \Cref{thm:bound}, we further incorporate the SFT loss on ground-truth demonstrations $D_{\text{train}}$ using a mixture weight $\lambda$, and thus the resulting learning objective is:
\begin{equation}\label{eq:obj}
    \mathcal{L}_\theta^{\text{HSL}} = \lambda \mathcal{L}_\theta^{D'} + (1-\lambda) \mathbb{E}_{d=(\tau, I)\sim D_{\text{train}}}\left[\frac{1}{T}\sum_{t=1}^T- \log P_\theta(a_t|\tau_{t-1}, o_t, I)\right].
\end{equation}
Similar to many existing methods of post-training LLM agents~\citep{song2024trial}, HSL requires ground-truth demonstrations for stable learning. However, as we will show later in the experiment, HSL is sample efficient and achieves performance that matches or exceeds baseline methods while using much less $D_\text{train}$.
\section{Experiment}\label{sec:exp}
\subsection{Setup}

\paragraph{Tasks and Evaluation Metrics.}
 We evaluate on three well-adopted agentic benchmarks with different levels of task diversity.
\textbf{ALFWorld}~\citep{shridhar2020alfworld} is an embodied agent benchmark in which the LLM agent navigates rooms and completes household tasks to satisfy a natural language goal (e.g., ``put some vase in safe''). The evaluation set consists of a \emph{seen} split (new tasks within scenes present in the training set) and an \emph{unseen} split (rooms or layouts absent from the training set). A task is considered successful only if all goal conditions are met. Following \cite{song2024trial}, we report the average \emph{Success Rate}.  \textbf{PlanCraft}~\citep{dagan2024plancraft} is a Minecraft crafting benchmark that tests an agent’s planning in a simplified crafting UI with text-only observations. Given a natural language goal item, an initial inventory and the recipe of the goal item, the agent must craft the target by moving items between inventory slots and the crafting grid and by smelting items as needed. We exclude the impossible subset and report the average \emph{Success Rate}.
\textbf{WebShop}~\citep{yao2022webshop} is a web agent benchmark in which an agent follows a natural language instruction to navigate a simulated e‑commerce site and purchase a product. Each episode ends upon purchase and returns a dense reward $r\in[0,1]$ based on the type matching, the coverage of the requested attributes/options, and the price constraint. We report \emph{Task Score} ($100\times$ average reward).
ALFWorld features longer horizons ($T=40$), a diverse set of valid goals, and multiple achievable goals per episode. In contrast, PlanCraft and WebShop each consist of a single task type with shorter horizons ($T=30$ and $T=10$, respectively). Moreover, each trajectory in WebShop can achieve only one goal. Therefore, we expect HSL to yield larger gains on ALFWorld, while PlanCraft and WebShop primarily test its robustness in a narrower task space.

\paragraph{Implementation.}
 We employ Llama-3.2-1B~\citep{dubey2024llama} as an agent model and Llama-3.3-70B~\citep{dubey2024llama} as a relabeling model. We set $\lambda=0.3$ and $\alpha=0.8$, and maintain the relabeled dataset $D'$ as a queue of size 100. After each optimization step, HSL collects and relabels $b=18$ trajectories, which are appended to $D'$ while the oldest entries are removed once the limit is reached. Additional implementation details are provided in \Cref{app:implement}.

\paragraph{Baselines.}
 We evaluate HSL as an add-on to SFT and Exploration-based Trajectory Optimization (ETO)~\citep{song2024trial}. SFT fine-tunes the agent model on ground-truth demonstrations from the original datasets, while ETO applies DPO~\citep{rafailov2023direct} to agent tasks, using ground-truth demos as preferred samples and agent-generated failures as dispreferred samples. We also include \textsc{SelfImit}~\citep{shi2023self}, which fine-tunes the agent on its own successful trajectories. 
To demonstrate that the gains stem from our relabeling approach rather than merely using a powerful external LLM, we include baseline methods that also use Llama-3.3-70B. Concretely,
\textsc{ReAct}~\citep{yao2023react} directly applies Llama-3.3-70B for reasoning and action selection. \textsc{BehaviorClone} uses Llama-3.3-70B to synthesize demonstrations and then fine-tunes the agent on the union of ground-truth and synthetic data, following~\cite{zeng2024agenttuning}. \textsc{Bagel}~\citep{murty2024bagel} is an offline data synthesis method that generates trajectories with the  agent and uses Llama-3.3-70B to label and filter them; subsequently, the agent is fine-tuned on the synthesized and ground-truth data.
\subsection{Main Results}
\Cref{tab:all-rslt} presents the performance of our proposed HSL and prior related methods across all three benchmarks. HSL improves both SFT and DPO significantly and consistently with much larger gains on ALFWorld, which involves longer tasks and a larger set of valid goals. In addition, the improvement on WebShop is smaller than that on PlanCraft, likely because WebShop allows only one achievable goal per trajectory, whereas LLM agents can craft multiple items within a single task in PlanCraft.
Although \textsc{ReAct} and \textsc{BehaviorClone} use the external LLM (Llama-3.3-70B) to enhance the agent, they perform substantially worse than SFT+HSL, validating our claim that predicting actions for an agentic task itself is more challenging than relabeling. \textsc{Bagel} also improves SFT on ALFWorld but still underperforms compared to SFT+HSL, showing the importance of continuously updating the relabeled demonstrations for the target agent during training. This finding echoes our analysis in \Cref{sec:learn}, which suggests that the relabeling process can benefit from the evolved agent. Finally, \textsc{SelfImit} fails to improve upon SFT, underscoring the importance of mining all successful demonstrations, even for \emph{uninstructed} goals. To further verify the robustness of our results, we rerun all fine-tuning experiments on ALFWorld and WebShop using different random seeds and report the results in \Cref{appendix:all-rslt-seed}.

\begin{table}[!ht]
  \centering
  \resizebox{0.89\linewidth}{!}{
  \begin{tabular}{l|c|c|c|c}
    \toprule
    \multirow{2}{*}{Method} & \multicolumn{2}{c|}{ALFWorld} & \multirow{2}{*}{PlanCraft} & \multirow{2}{*}{WebShop} \\
     & seen & unseen &  & \\
    \midrule
    \textsc{ReAct}~\citep{yao2023react}         & 33.57 & 20.90  & 59.17 & 48.37 \\
    \textsc{BehaviorClone}~\citep{zeng2024agenttuning}  & 83.57 & 88.81 & 64.38 & 65.19 \\
    \textsc{Bagel}~\citep{murty2024bagel}      & 84.29 & 91.79 & 69.17 & 62.18 \\
    \textsc{SelfImit}~\citep{shi2023self}      & 84.29 & 76.87 & 56.25 & 58.37 \\
    SFT                                       & 82.14 & 78.36 & 70.00 & 63.81 \\
    DPO~\citep{song2024trial}                  & 85.71 & 82.84 & 71.25 & \underline{69.54} \\
    \midrule
    SFT+HSL (Ours)                             & \textbf{93.57} & \textbf{97.76} & \underline{75.12} & 66.97 \\
    DPO+HSL (Ours)                             & \underline{92.86} & \underline{94.78} & \textbf{75.42} & \textbf{70.52} \\
    \bottomrule
  \end{tabular}}
  \caption{Performance on ALFWorld, PlanCraft, and WebShop.}
  \label{tab:all-rslt}
\end{table}

\begin{figure}[t]
  \centering
  \begin{minipage}[t]{0.62\textwidth}
    \centering
    \includegraphics[width=\linewidth]{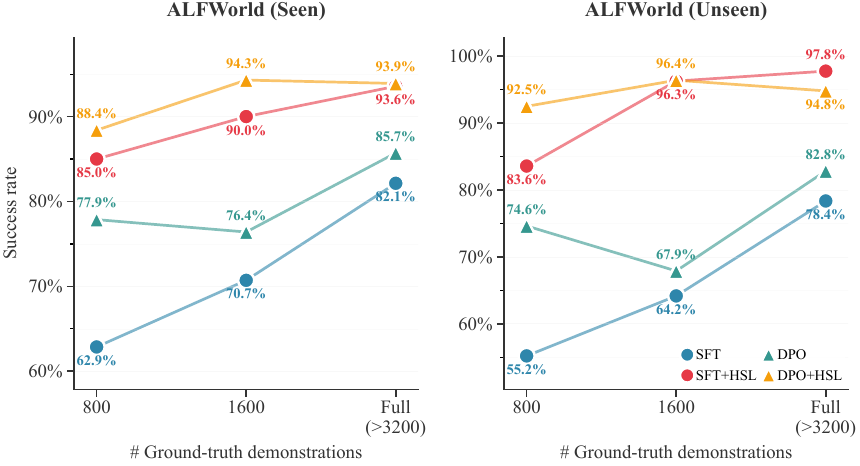}\\
  \end{minipage}\hfill
  \begin{minipage}[t]{0.38\textwidth}
    \centering
    \includegraphics[width=\linewidth]{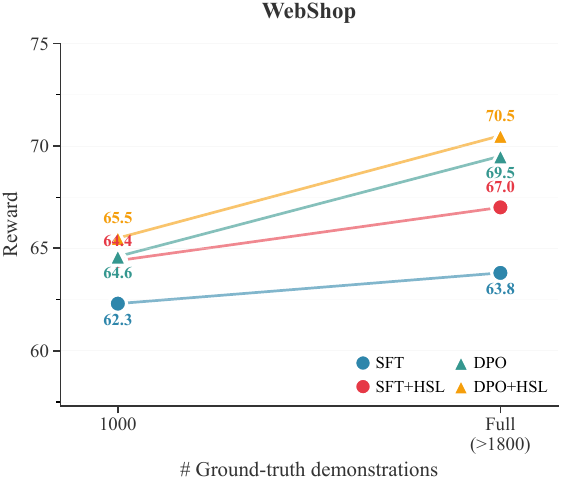}\\
  \end{minipage}
  \caption{Sample efficiency of HSL with different post-training methods.}
  \label{fig:sample-efficiency}
\end{figure}
\paragraph{Sample efficiency.} To assess the sample efficiency of HSL, we evaluate how the number of ground-truth demonstrations affects the performance of HSL and the post-training baselines on ALFWorld and WebShop. As shown in~\Cref{fig:sample-efficiency}, adding HSL to either SFT and DPO consistently and substantially increases success rates on ALFWorld, given the same ground-truth data budget as the SFT and DPO baselines. Notably, the improvement is particularly larger in the \emph{Unseen} split, where HSL nearly doubles SFT at 800 demonstrations and doubles DPO at 1,600 demonstrations. This again highlights that HSL facilitates generalization. More importantly, HSL, which uses less than one quarter of the ground-truth data, surpasses SFT-only or DPO-only models trained on the full dataset. For example, DPO+HSL reaches $92.5\%$ with just 800 ground-truth demonstrations on ALFWorld (Unseen), whereas DPO attains only $82.8\%$ even with more than 3,200 ones.
On WebShop, the improvements are smaller but follow trends similar to those in ALFWorld across data sizes and baselines.
This suggests that HSL is particularly effective for open-ended tasks with diverse goal types, such as ALFWorld, where agents are more likely to ``accidentally'' accomplish uninstructed tasks and therefore benefit more from relabeling.
\begin{figure}[t]
\centering
\includegraphics[width=0.65\linewidth]{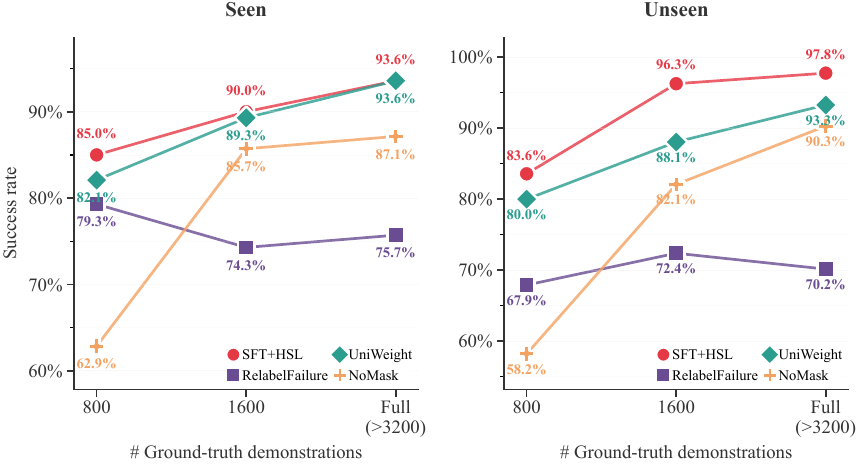}
    \caption{Ablation studies on ALFWorld.}
    \label{fig:ablation}
\end{figure}

\subsection{Ablation Studies}
We conduct ablation studies to quantify the contribution of each component and technique in HSL. In particular, we evaluate \textsc{RelabelFailure}, which only relabels the final achieved goal on failed trajectories and resembles existing hindsight generation methods in the RL literature~\citep{cideron2020higher, ghosh2019learning}; \textsc{UniWeight}, which samples relabeled demonstrations uniformly without any reweighting; and \textsc{NoMask}, which does not apply the action-irrelevant masking. 

\Cref{fig:ablation} presents all variants, including the complete model SFT+HSL trained with different numbers of ground truth demonstrations on ALFWorld. Removing any component reduces performance, though the drop from removing demonstration reweighting (\textsc{UniWeight}) is smaller in the \emph{Seen} split. In the \emph{Unseen} split, \textsc{UniWeight} is markedly worse, which suggests that increasing expert–hindsight occupancy coverage in \Cref{thm:bound} facilitates generalization. With the fewest ground truth demonstrations, \textsc{NoMask} shows the most significant decline, likely because a weaker base agent executes many back and forth actions within a trial, for example repeatedly visiting a receptacle, which are not helpful for the relabeled goal. This highlights the need for a stronger hindsight expert who proposes more optimal actions. \textsc{RelabelFailure} is consistently and substantially worse than the full model. In particular, it does not benefit from additional ground truth demonstrations, mainly because a more potent base agent trained with more demonstrations produces fewer failed trajectories. It verifies our decision to leverage all the intermediate goals that have been achieved.

\subsection{Analysis: Understanding the Effectiveness and Limitations of HSL}
\begin{figure}
    \centering
    \includegraphics[width=0.85\linewidth]{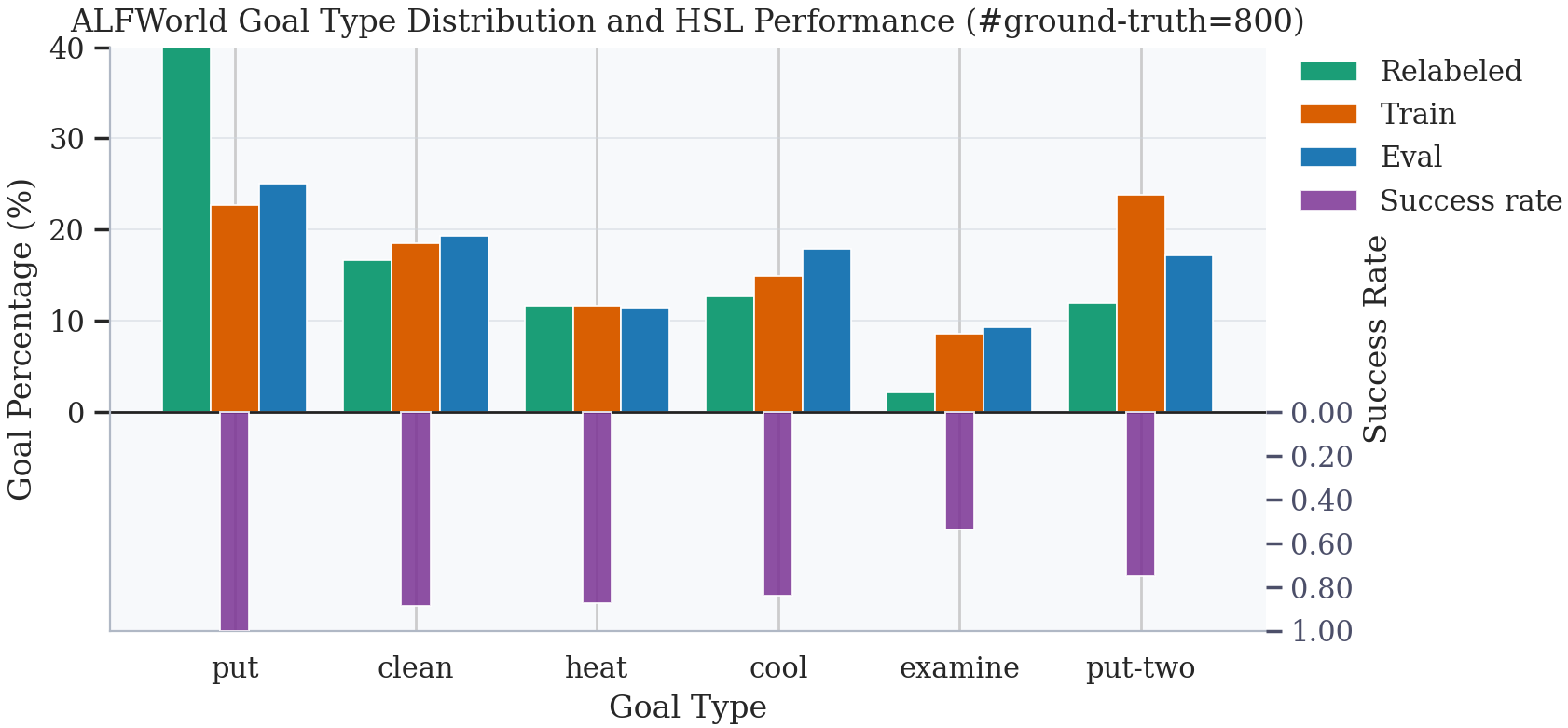}
    \caption{Top: Distribution of the goal types in relabeled and ground-truth demonstrations. Bottom: Success rate of SFT+HSL across different goal types.}
    \label{fig:goal-dist}
\end{figure}
To gain a deeper understanding of the relabeled demonstrations and how learning from them helps, we conduct quantitative and qualitative analyses on ALFWorld, which includes diverse goal types. Because the model trained on the full dataset achieves near-perfect success rates across all tasks in ALFWorld, we focus on SFT+HSL trained with only $800$ ground-truth demonstrations. First, we compare the distribution of goal types in the relabeled demonstrations with that of the ground-truth data. As shown in \Cref{fig:goal-dist}, the most augmented goal in the relabeled set is \texttt{put}, and the agent correspondingly achieves perfect performance on \texttt{put}. 
\begin{wrapfigure}{r}{0.55\linewidth}
    \centering
    \includegraphics[width=\linewidth]{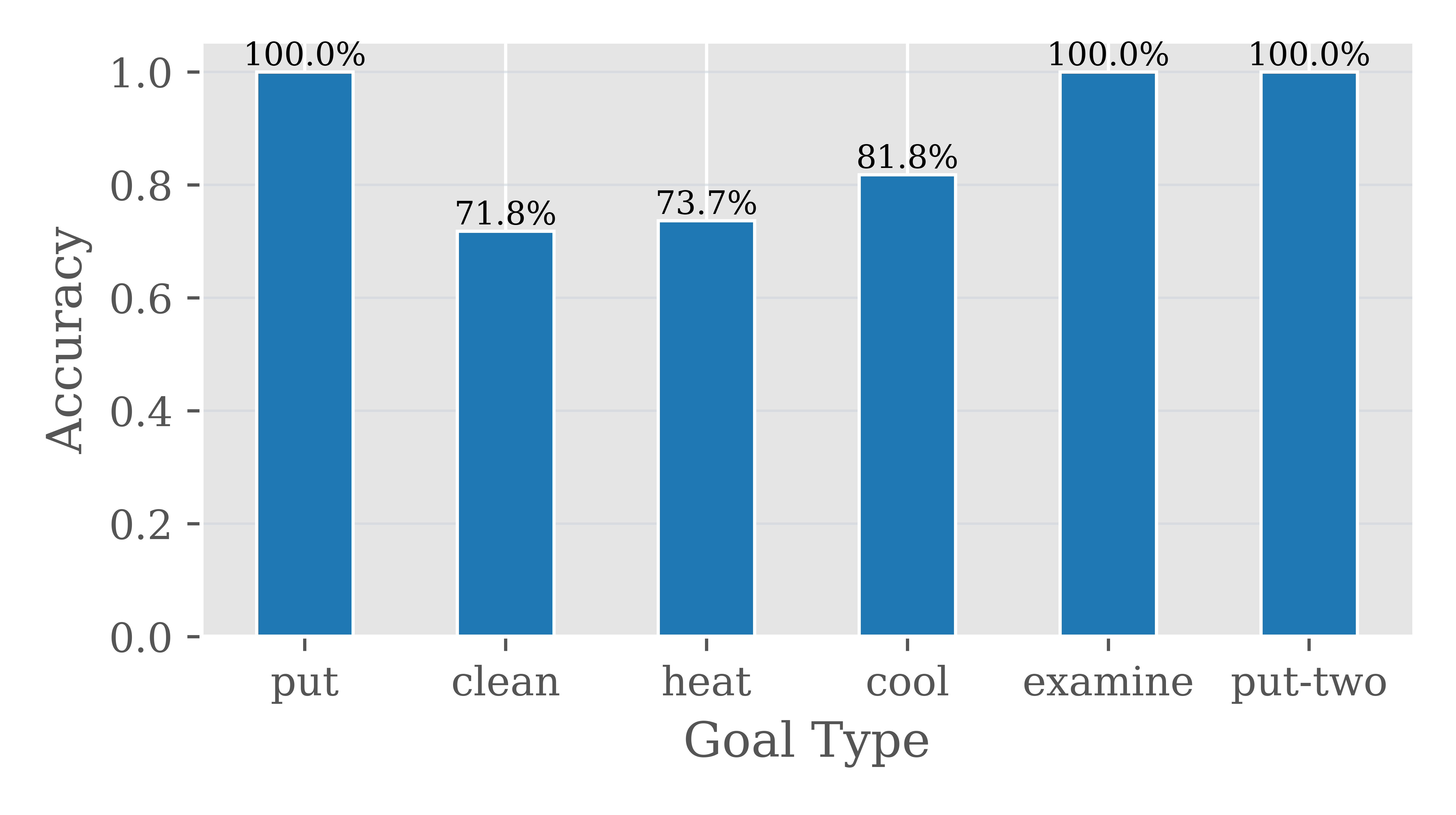}
    \caption{Accuracy of relabeling per goal type.}
    \vspace{-10pt}
\label{fig:relabel_acc}
\end{wrapfigure}
For \texttt{clean}, \texttt{heat}, and \texttt{cool}, the proportions in the relabeled data are close to those in the ground truth, and the agent achieves success rates above $80\%$. In contrast, \texttt{examine} and \texttt{put-two} are underrepresented in the relabeled demos.
While \texttt{put} and \texttt{put-two} appear at similar rates in the ground truth, the relabeled demos contain roughly twice as many \texttt{put} as \texttt{put-two}. 
Moreover, \texttt{examine} accounts for less than $5\%$ of the relabeled set. This pattern likely arises because \texttt{put} is a subtask of \texttt{put-two}, and \texttt{examine} is the most long-tailed goal in the training data (under $10\%$), such that the agent rarely achieves it by chance. As a result, the agent trained with the relabeled demos records its lowest success on \texttt{examine} at $54.85\%$, and its performance on \texttt{put-two} lags notably behind \texttt{put}.

We further assess the quality of the relabeled data. We randomly sample 200 relabeled demonstrations and manually verify their labels. Of these, 180 are correct, yielding an accuracy of $90\%$, which helps explain the strong performance of the HSL-trained LLM agent. \Cref{fig:relabel_acc} shows the accuracy for each goal type.
Notably, the relabeling is performed in a zero-shot setting, yet it is substantially more accurate than predicting actions with in-context examples (\textsc{ReAct}). This observation further supports our hypothesis that relabeling trajectories in hindsight is easier than executing the task itself. Two illustrative cases are shown in \Cref{fig:relabel-cases}.


\begin{figure*}[t]
  \centering
  \begin{subfigure}[t]{0.49\textwidth}
    \begin{tcolorbox}[
      colback=white,
      colframe=black!5,
      arc=0pt,
      boxrule=0.2pt,
      width=\linewidth,
      top=0pt,
      bottom=0pt,
      left=1pt,
      right=1pt
    ]
      \begin{tcolorbox}[
        colback=codebg,
        colframe=gray!15,
        width=\linewidth,
        boxsep=0.5pt,
        top=0pt,
        bottom=0pt,
        left=1pt,
        right=1pt,
        title={\textcolor{black}{\textbf{\tiny Example of Relabeled Demonstrations (Correct)}}},
        fonttitle=\tiny
      ]
\begin{lstlisting}[basicstyle=\tiny\ttfamily, escapeinside={(*}{*)}]
(*\textcolor{red}{\textbf{Relabeled Goal}}*): put a clean cloth in toilet
...
(*\textbf{Step 3}*)
  (*\textcolor{blue}{\textbf{Action}}*): take cloth 2 from countertop 2
  (*\textcolor{YellowOrange}{\textbf{Observation}}*): You pick up the cloth 2 from the 
  countertop 2.
  (*\textcolor{red}{\textbf{Relevant to Goal}}*): True
...
(*\textbf{Step 5}*)
  (*\textcolor{blue}{\textbf{Action}}*): clean cloth 2 with sinkbasin 1
  (*\textcolor{YellowOrange}{\textbf{Observation}}*): You clean the cloth 2 using 
  the sinkbasin 1.
  (*\textcolor{red}{\textbf{Relevant to Goal}}*): True
...
(*\textbf{Step 7}*)((*\textit{the last step}*))
  (*\textcolor{blue}{\textbf{Action}}*):  put cloth 2 in/on toilet 1
  (*\textcolor{YellowOrange}{\textbf{Observation}}*): You put the cloth 2 in/on the 
  toilet 1.
  (*\textcolor{red}{\textbf{Relevant to Goal}}*): True
\end{lstlisting}
      \end{tcolorbox}
    \end{tcolorbox}
  \end{subfigure}
  \hfill
  \begin{subfigure}[t]{0.49\textwidth}
    \begin{tcolorbox}[
      colback=white,
      colframe=black!5,
      arc=0pt,
      boxrule=0.2pt,
      width=\linewidth,
      top=0pt,
      bottom=0pt,
      left=1pt,
      right=1pt
    ]
      \begin{tcolorbox}[
        colback=codebg,
        colframe=gray!15,
        width=\linewidth,
        boxsep=0.5pt,
        top=0pt,
        bottom=0pt,
        left=1pt,
        right=1pt,
        title={\textcolor{black}{\textbf{\tiny Example of Relabeled Demonstrations (Incorrect)}}},
        fonttitle=\tiny
      ]
\begin{lstlisting}[basicstyle=\tiny\ttfamily, escapeinside={(*}{*)}]
(*\textcolor{red}{\textbf{Relabeled Goal}}*): put a cool tomato in fridge
...
(*\textbf{Step 30}*)
 (*\textcolor{blue}{\textbf{Action}}*): take tomato 1 from countertop 1
 (*\textcolor{YellowOrange}{\textbf{Observation}}*): You pick up the tomato 1 from 
 the countertop 1.
 (*\textcolor{red}{\textbf{Relevant to Goal}}*): True

(*\textbf{Step 31}*)
 (*\textcolor{blue}{\textbf{Action}}*): go to fridge 1
 (*\textcolor{YellowOrange}{\textbf{Observation}}*): The fridge 1 is open. In it, 
 you see a plate 2, and a potato 3.
 (*\textcolor{red}{\textbf{Relevant to Goal}}*): True

(*\textbf{Step 32}*)((*\textit{the last step}*))
 (*\textcolor{blue}{\textbf{Action}}*): cool tomato 1 with fridge 1
 (*\textcolor{YellowOrange}{\textbf{Observation}}*): You cool the tomato 1 using 
 the fridge 1.
 (*\textcolor{red}{\textbf{Relevant to Goal}}*): True
\end{lstlisting}
      \end{tcolorbox}
    \end{tcolorbox}
  \end{subfigure}
  \caption{Two examples (correct and incorrect) of relabeled demonstrations in ALFWorld. Each shows agent trajectories with the corresponding \textcolor{red}{outputs} (relabeled goals and action relevance) by the relabeler. In the right example, the relabeler makes a mistake: the agent cools a tomato but \emph{never} places it in the fridge, so only a subtask of the relabeled goal is satisfied.}
  \label{fig:relabel-cases}
\end{figure*}

\begin{ta}
{Takeaway} Taken together with our theoretical (\Cref{sec:learn}) and empirical results, these findings indicate that HSL narrows the gap between the agent and the expert policy in a sample-efficient manner by transforming noisy agent trajectories into high-quality demonstrations from a hindsight expert. Nevertheless, the magnitude of the improvement depends on the hindsight expert’s optimality and coverage, as well as task characteristics such as diversity and the number of tasks per trajectory.
\end{ta}
\section{Conclusion}
We present Hindsight Supervised Learning (HSL), a sample-efficient learning framework that relabels agent trajectories with goals actually achieved and learns from these relabeled demonstrations. By incorporating irrelevant-action masking and reweighting, HSL enhances both the coverage and quality of relabeled data. Comprehensive experiments on three agentic benchmarks, ALFWorld, PlanCraft and WebShop, show that HSL consistently boosts SFT and DPO while reducing reliance on ground-truth demonstrations, with larger gains on long-horizon tasks with diverse goal spaces. These findings show that hindsight relabeling is an efficient way to leverage LLMs’ reasoning capabilities while avoiding the need to model environment dynamics. As future work, we plan to extend this framework to more open-ended environments and multimodal agents. Nevertheless, our work still has limitations, which we discuss in \Cref{app:limitations}.

\section*{Acknowledgment} 
We thank Tong Sun for discussions, as well as the anonymous reviewers and ACs for their feedback and comments.

\bibliography{iclr2025_conference}
\bibliographystyle{iclr2025_conference}

\appendix

\newpage

\section{Language Model Usage Statement}
During the preparation of the manuscript, we have used LLM to polish the draft and fix grammar issues. LLM did not participate in the ideation of the project, such as problem formulation and methodology.

\section{Limitations}\label{app:limitations}
\paragraph{Limitations of the Proposed Method.}
Our method, in its current form, uses an external LLM for zero-shot relabeling without fine-tuning, so it depends on the reasoning ability of powerful models. This design lets us leverage state-of-the-art proprietary LLMs, and our experiments show that using them for relabeling yields markedly better final performance than asking the same models to solve the tasks directly. For more complex tasks, however, the relabeling model may require further adaptation and optimization. Jointly learning the relabeling model and the LLM agent is left for future work. 

Second, we label each action in a trajectory as either relevant or not to the relabeled goal, which assumes a black-and-white notion of relevance. In complex settings an action can be relevant yet suboptimal for the goal. For example, purchasing multiple items one by one is less efficient than placing them all in a cart and paying once. A more fine-grained notion of relevance may therefore be needed, which would entail fine-tuning the relabeling model in a role akin to a reward or value function.

Third, as shown in the experiments, although our demonstration reweighting improves agent performance and many goal types are well represented in the relabeled demonstrations, the resulting distribution does not fully match the goal-type distribution in the original dataset. Our method chiefly exploits exploration in a sample-efficient way, so further work is needed to increase the diversity of exploration.

Fourth, while HSL is sample-efficient and reduces reliance on ground truth demonstrations, it unavoidably increases training compute because it uses an external relabeling model. The inference cost stays the same. This mirrors a broader trend in LLM agents: investing more compute during training (GPU hours and LLM tokens) to improve LLM agent performance~\citep{lightman2023let, song2024trial}. We believe that this is a valid contribution since collecting human demonstrations for agent tasks is costly and hard to scale.

Last, although the relabeling process and learning from relabeled demonstrations are unsupervised or self-supervised, HSL still requires ground truth demonstrations to stabilize training by aligning the agent’s support with the expert’s. This limitation is shared by many other post-training methods. Future work should develop stronger exploration strategies that enable HSL without any ground-truth demonstrations.

\paragraph{Limitations of Experiments.} 
We use Llama-3 as the base LLM agent and evaluate on three standard agentic benchmarks, ALFWorld, PlanCraft and WebShop. Although we conduct comprehensive ablations and analyses, we do not cover stronger base LLMs or additional agentic tasks. It is mainly due to the limit of time and computational resource as well as lack of task-specific ground-truth demonstrations. For many GUI-agent benchmarks, for instance, either an interactive environment or ground-truth demonstrations are missing. When they are available, the environments are often resource-intensive to set up. 

Second, as discussed in the experiments, the gains from HSL are smaller for short-horizon tasks with a narrow goal space. We expect larger benefits in more open-ended settings with diverse goals, where LLM agents must adapt to broader task distributions.

Third, the tasks adopted in our experiments contains predefined goal spaces, enabling the relabeling model to determine whether there is any valid goal achieved and what it is. This setting is common in existing LLM agent benchmarks, but it may not hold in open-world environments. Future work can explore relabeling without a predefined goal space, such as deriving goal space from a set of user instructions or based on heuristics.

Last, most of existing LLM agent benchmarks, such as the ones adopted in our experiments, assumes all goals are equally valuable and safe. However, in complex real-world scenarios, the unintended goals may have low value for the users or even be malicious. To address these issues, future work can score the utility of unintended goals by comparing it with the user instructions, and adopt safeguard models to filter out or penalize the malicious goals.

\section{Prompts for Relabeling Process}\label{appendix:relabel-prompt}
\Cref{fig:alf_goal_prompt} and \Cref{fig:shop_goal_prompt} show the goal relabeling prompts for ALFWorld and WebShop, respectively. \Cref{fig:alf_relevant_prompt} and \Cref{fig:shop_relevant_prompt} are the action relevance labeling prompts for ALFWorld and WebShop. Because the crafted item in PlanCraft can be read directly from the returned text-based observation, we do not use the relabeling LLM to infer the crafted item. Instead, we use it to infer another part of the input: the recipe used to craft the goal item. \Cref{fig:craft_goal_prompt} presents the corresponding prompt. \Cref{fig:craft_relabel_prompt} is the action relevance labeling prompt for PlanCraft.

\begin{figure*}[h]
  \centering
  \small
  \begin{tcolorbox}[    colback=white,
    colframe=prompt1-frame,
    arc=2pt,
    left=4pt, right=4pt, top=4pt, bottom=4pt,
    before skip=10pt, after skip=10pt,
    bottomrule=0pt, 
    fonttitle=\bfseries,
    coltitle=white, title={\textbf{Goal inference prompt for ALFWorld}}]
You are a goal-inference assistant for AlfWorld. Given a sequence of Actions and Observations, track the agent's Location and Inventory after each step, then derive and record any goals from the templates below that have been completed. A trajectory may achieve multiple goals or none.
\begin{enumerate}

\item After each Action/Observation pair:\\
    (1) update the agent's Location and Inventory. Invalid actions (e.g., using or dropping an object the agent doesn't have) leave both unchanged. You should determine if the action has any effect based on the given Observation!\\
    (2) Then check whether any of the goal templates have been satisfied by the agent's actions up to that point. When a goal is achieved, add it to the running list of Reached goal values and keep that list for subsequent steps.\\
    (3) Do not summarise or skip any steps, even if the observation is identical to previous ones.
\item Hide all object IDs; refer to objects and receptacles only by their type names (e.g. “mug”, “knife”, “drawer”), never by numeric or alphanumeric identifiers.
\item Inventory format: list each inventory item by type, repeating names for duplicates (e.g. [mug, knife, knife]).
\item At the end, output Final goal: followed by the list of all goals achieved (e.g. [goalA, goalB]). If no goals were achieved, set Final goal: to a brief description of the agent's behaviour.
\end{enumerate}
Allowed goal templates (with their intended behaviours):
\begin{itemize}
\item  put a [object] in [receptacle] / put some [object] on [receptacle] - Pick \& Place: - the agent must find an object of the desired type, pick it up, find the correct location to place it, and put it down there.
\item  look at [object] under the [lamp] / examine the [object] with the [lamp] - Examine in Light: - the agent must find an object of the desired type, locate and turn on a light source with the desired object in-hand
\item  put a clean [object] in [receptacle] / clean some [object] and put it in [receptacle] - Clean \& Place: the agent must find an object of the desired type, pick it up, go to a sink or a basin, wash the object by turning on the faucet, then find the correct location to place it, and put it down there.
\item  put a hot [object] in [receptacle] / heat some [object] and put it in [receptacle] - Heat \& Place: the agent must find an object of the desired type, pick it up, go to a microwave, heat the object turning on the microwave, then find the correct location to place it, and put it down there.
\item  put a cool [object] in [receptacle] / cool some [object] and put it in [receptacle] - Cool \& Place: the agent must find an object of the desired type, pick it up, go to a fridge, put the object inside the fridge and cool it, then find the correct location to place it, and put it down there.
\item  put two [object] in [receptacle] / find two [object] and put them in [receptacle] - Pick Two \& Place: the agent must find an object of the desired type, pick it up, find the correct location to place it, put it down there, then look for another object of the desired type, pick it up, return to previous location, and put it down there with the other object.
\end{itemize}
\textbf{Output format (exactly):}
Return a single JSON list. Each element of the list should be a JSON object with the following structure for each step:
\begin{tcblisting}{
  listing only,
  breakable,
  listing options={
    language=json,
    basicstyle=\ttfamily\small,
    columns=fullflexible,
    keepspaces=true,
    showstringspaces=false,
    upquote=true
  }
}
{
  "step": <number>,
  "action": "<action>",
  "observation": "<observation>",
  "reasoning": "<analyze whether the action has any effect based 
  on the given observation, if yes what is that>",
  "location": "<location>",
  "inventory": ["<item>", "<item>", ...],
  "reached_goals": ["<goalA>", "<goalB>", ...]
}
\end{tcblisting}
\end{tcolorbox}
  \caption{\textbf{ Goal inference prompt for ALFWorld}}
  \label{fig:alf_goal_prompt}
\end{figure*}

\begin{figure*}[h]
  \centering
  \small
  \begin{tcolorbox}[    colback=white,
    colframe=prompt1-frame,
    arc=2pt,
    left=4pt, right=4pt, top=4pt, bottom=4pt,
    before skip=10pt, after skip=10pt,
    bottomrule=0pt, 
    fonttitle=\bfseries,
    coltitle=white, title={\textbf{Action relevance labeling prompt for ALFWorld}}]
You are a step-relevance classifier for AlfWorld. Given a goal and a sequence of actions, observations, with location and inventory derived by a model, decide for each step whether it is necessary to achieving the goal. 
A step is “relevant” if it is a necessary prerequisite or directly advances toward the goal; actions that involve the wrong objects, revisit unrelated locations, or otherwise do not help achieve the goal are “irrelevant”. Some goals may require exploration in the early stage to find the relevant objects, and intermediate tasks such as heating, cooling, cleaning, examining, or finding an object.
For each step, provide a brief chain of thought to explain how you judged the step relevant or irrelevant. Do not summarise or skip any steps, even if the observation is identical to previous ones.

\textbf{Output format (exactly):}
Return a single JSON array. For each step, output an object with these fields:
\begin{tcblisting}{
  listing only,
  breakable,
  listing options={
    language=json,
    basicstyle=\ttfamily\small,
    columns=fullflexible,
    keepspaces=true,
    showstringspaces=false,
    upquote=true
  }
}
{
  "step": <number>,
  "action": "<provided action>",
  "observation": "<provided observation>",
  "location": "<provided location>",
  "inventory": ["<item>", ...],
  "reasoning": "<analyze the effect and function of the action, 
  then analyze whether it's necessary to achieving the goal>"
  "is_relevant_to_goal": "yes" | "no",
}
\end{tcblisting}

\end{tcolorbox}
  \caption{\textbf{Action relevance labeling prompt for ALFWorld}}
\label{fig:alf_relevant_prompt}
\end{figure*}

\begin{figure*}[t]
  \centering
  \small

  \begin{tcolorbox}[
    enhanced,
    colback=white,
    colframe=prompt1-frame,
    colbacktitle=prompt1-frame,
    coltitle=white,
    title=\textbf{Goal inference prompt for PlanCraft},
    arc=2pt,
    boxrule=0.8pt,
    left=4pt, right=4pt, top=4pt, bottom=4pt
  ]

You are a recipe-inference assistant for MineCraft. Given a crafted item, its recipes and a sequence of actions, the change of slots (inventory, crafting grid, output) as effects of the action, derive which recipe is actually used for crafting the item.\\

Slot legend:
\begin{itemize}
  \item \text{[0]} is the output slot where the crafted result appears.
  \item \text{[A1]--[C3]} are the 3$\times$3 crafting grid (rows A/B/C, columns 1--3).
  \item \text{[I1]--[I36]} are regular inventory slots used for storage.
\end{itemize}

Task:
\begin{itemize}
  \item Compare the actions and items in crafting grid in the last several steps to the given recipes.
  \item For shapeless items, ignore coordinates, match only the multiset of required ingredients and counts, with no extras. For shaped items, match the relative pattern up to translation; any coordinates shown are just one valid placement, and do not rotate/mirror unless explicitly allowed.
        Smelting: crafting-grid locations are irrelevant---identify from the action sequence and match the smelt input(s) $\to$ output.
  \item If there is only one available recipe, copy that directly.
  \item If there are more than one recipes align with the actions, select the first one.
\end{itemize}

\textbf{Output format (exactly):}\\
Return exactly one JSON object:

  \begin{tcolorbox}[
    enhanced,
    breakable,
    colback=black!2,
    colframe=black!60,
    arc=2pt,
    boxrule=0.6pt,
    left=4pt, right=4pt, top=4pt, bottom=4pt,
    listing engine=listings,
    listing only,
    listing options={language=json}
  ]
{
  "reasoning": "<analyze how each recipe fits the action and items in crafting grid>",
  "used\_recipe\_id": "<recipe id>",
  "used\_recipe": "<copy the recipe here>"
}
  \end{tcolorbox}

  \end{tcolorbox}

  \caption{\textbf{Goal inference prompt for PlanCraft}}
  \label{fig:craft_goal_prompt}
\end{figure*}

\begin{figure*}[t]
  \centering
  \small

  \begin{tcolorbox}[
    enhanced,
    colback=white,
    colframe=prompt1-frame,
    colbacktitle=prompt1-frame,
    coltitle=white,
    title=\textbf{Action relevance labeling prompt for PlanCraft},
    arc=2pt,
    boxrule=0.8pt,
    left=4pt, right=4pt, top=4pt, bottom=4pt
  ]

You are a step-relevance classifier for Minecraft.
Given a crafted item, the chosen recipe, and a sequence of actions with slot deltas (inventory, crafting grid, output), decide for each step whether the action is NECESSARY to craft that item.

Slot legend:
\begin{itemize}
\item \text{[0]} = output slot (crafted result appears here; nothing can be moved into [0]).
\item \text{[A1]-[C3]} = 3x3 crafting grid.
\item \text{[I1]-[I36]} = inventory.
\end{itemize}

Location rules:
\begin{itemize}
  \item Shapeless: ignore coordinates; match only the multiset of required ingredients and counts (no extras).
  \item Shaped: match the relative pattern **up to translation** (no rotate/mirror unless stated).
  \item Smelting: grid coordinates are irrelevant; match input→output.
\end{itemize}

Decision procedure (deterministic):
\begin{itemize}
  \item Identify the \textbf{successful craft}: the first step where the target's count increases (or appears in [0]); include any subsequent move that transfers the target from [0] to inventory as NECESSARY.
  \item \textbf{Back-propagate dependencies} from that craft:
    - Mark the craft/smelt step itself as NECESSARY.
    - For each grid cell whose contents were \textbf{consumed} by that craft, mark as NECESSARY the \textbf{last move} that placed that consumed item into that cell (only the last placement before consumption).
    - Recursively mark as NECESSARY any earlier steps that \textbf{produced intermediates} (craft/smelt or moves from [0]) which were later consumed (directly or via further intermediates) in this chain.
    - \textbf{Unblockers} are NECESSARY: a move that removes/relocates a wrong or extra item \textbf{from a cell required by the final layout} so that the required item can occupy it. The mistaken placement itself is NOT necessary.
  \item Everything else is NOT necessary: redundant placements beyond needed counts, inventory shuffles not used by the chain, unrelated crafts/smelts, no-ops, attempts to move/smelt into [0], or actions that don't match the observed slot delta.
\end{itemize}
Task: 
For EVERY step, output a label according to the procedure above.

\textbf{Output format (exactly):}\\
Return a SINGLE JSON list for each step. Each element:

  \begin{tcolorbox}[
    enhanced,
    breakable,
    colback=black!2,
    colframe=black!60,
    arc=2pt,
    boxrule=0.6pt,
    left=4pt, right=4pt, top=4pt, bottom=4pt,
    listing engine=listings,
    listing only,
    listing options={language=json}
  ]
[
  \{\\
    "step": <number>,\\
    "action": "<provided action>",\\
    "slot\_update": "<derive update of inventory, 
    crafting\_grid, output slots>",\\
    "reasoning": "<explain the step's effect and why it is or isn't in the dependency chain>",\\
    "is\_relevant": "yes" | "no"\\
  \},
  ...
]

  \end{tcolorbox}

  \end{tcolorbox}

  \caption{\textbf{Action relevance labeling prompt for PlanCraft}}
  \label{fig:craft_relabel_prompt}
\end{figure*}

\begin{figure*}[h]
  \centering
  \small
  \begin{tcolorbox}[    colback=white,
    colframe=prompt1-frame,
    arc=2pt,
    left=4pt, right=4pt, top=4pt, bottom=4pt,
    before skip=10pt, after skip=10pt,
    bottomrule=0pt, 
    fonttitle=\bfseries,
    coltitle=white, title={\textbf{ Goal inference prompt for WebShop}}]

You are a goal-inference assistant for Webshop. Given a full trajectory with Actions (search or click) and Observations (page text, system message), infer what user's intended and also succesfully purchased:
\begin{itemize}
    \item product (str) - generic product type; ignore brand/manufacturer and DON'T copy the full title. Prefer the head noun from category/title
    \item attributes (list) - short descriptive phrases from title/description; not clickable (e.g., [“portable”, “mid-century style”]). NOT brand.
    \item options clicked (dict) - the literal option texts the agent clicked on this product, in click order (e.g., <size> | <color> | <quantity>). Do not invent labels or pairs; just copy the clicked option strings.
    \item quantity (str|null) - the chosen quantity if it was explicitly clicked; otherwise 1.
    \item price (number) - the price of the selected product/variant
\end{itemize}

Derive procedure:
\begin{enumerate}
\item  Extract the exact `query` string(s) from search actions.
\item  Derive `selected` (extracting `product`, `attributes`, `options`, `quantity`, and `price`) from clicks + final product page. Note: clicking an non-existing product select nothing!
\item  `selected price`: copy the exact per-item price number shown on the final product page after the last option click. If it's a range, copy the upper bound.
\item  `query satisfaction` (compare `query` vs `selected`). verify that all requirements in the `query` are satisfied by `selected`; Spot any contradiction.
\item  Derive `purchase success` based on purchase completion: purchase success = true only if Observations confirm a terminal purchase action took effect.
\end{enumerate}
\textbf{Output format (exactly):}
Return a single JSON array. For each step, output an object with these fields:
\begin{tcblisting}{
  listing only,
  breakable,
  listing options={
    language=json,
    basicstyle=\ttfamily\small,
    columns=fullflexible,
    keepspaces=true,
    showstringspaces=false,
    upquote=true
  }
}
{
  'query': <extract the exact queries from search actions>,
  'selected': <product type| attributes... | options_clicked... 
  | quantity | price>,
  'selected_price': <number>,
  'reasoning': <brief analysis of whether ALL query requirements 
  are satisfied and any contradictions>,
  'query_satisfaction': True | False
  'purchase_success': True | False,
}
\end{tcblisting}

\end{tcolorbox}
  \caption{\textbf{ Goal inference prompt for WebShop}}
\label{fig:shop_goal_prompt}
\end{figure*}

\begin{figure*}[h]
  \centering
  \small
  \begin{tcolorbox}[    colback=white,
    colframe=prompt1-frame,
    arc=2pt,
    left=4pt, right=4pt, top=4pt, bottom=4pt,
    before skip=10pt, after skip=10pt,
    bottomrule=0pt, 
    fonttitle=\bfseries,
    coltitle=white, title={\textbf{Action relevance labeling prompt for WebShop}}]

You are a step-relevance classifier for WebShop. Input:
\begin{itemize}

\item  `target intention` (JSON): the shopping intention.
\item  `trajectory`: ordered steps; each step has:\\
  - `Action`: the web action by user.\\
  - `Observation`: the observsation returned by the web server after the action.\\
  - `current intention` (JSON): intention inferred up to this step.
\end{itemize}

Decide per step (in hindsight) whether or not:
(1) Did `current intention` change vs the previous step? Summarize the delta; else none.
(2) If no change: was the action needed for the eventual purchase path? Needed = removing it would break what actually led to purchase. Not needed = no observable effect, dead ends later abandoned, toggles undone before use, unrelated clicks, no-ops.

Judge ONLY from observation-confirmed effects + the provided state. Do not skip steps.

Rules
\begin{itemize}
\item Use only observation-confirmed effects and the provided state.
\item Judge every step; don't skip.
\item `relevance` = yes iff (1) intention changed or (2) action was needed.
\end{itemize}
\textbf{Output format (exactly):}
Return a single JSON array; one object per step:
\begin{tcblisting}{
  listing only,
  breakable,
  listing options={
    language=json,
    basicstyle=\ttfamily\small,
    columns=fullflexible,
    keepspaces=true,
    showstringspaces=false,
    upquote=true
  }
}
{
  "step": <number>,
  "action": "<exact provided action>",
  "intention_delta": "<concise diff or 'none'>",
  "needed_for_purchase": "<explain whether the action is necessary 
  to the purchase behavior>",
  "relevance": "yes" | "no"
}
\end{tcblisting}

\end{tcolorbox}
  \caption{\textbf{Action relevance labeling prompt for WebShop}}
\label{fig:shop_relevant_prompt}
\end{figure*}

\section{Proof of \Cref{thm:bound}}
\label{appendix:proof-thm1}

\paragraph{Preliminaries and notation.}
For simplicity of notation, we let $x_t=(\tau_{t-1},o_t)$. a policy $\pi$ specifies a conditional action distribution $\pi(a\mid x,I)$. We use total variation distance $\TV(p,q)=\tfrac12\lVert p-q\rVert_1$ and KL-divergence $D_{\mathrm{KL}}(p\Vert q)=\int p\log\frac{p}{q}$ to measure distance between action distributions. We assume a prior distribution $p(I)$ over instructions and a finite horizon $T$ (the maximum number of steps in each episode). The \emph{finite-horizon occupancy} (joint measure over $(x,I)$) of $\pi$ is
\begin{equation}
\label{eq:finite-occ}
\rho_\pi(x,I)\;\triangleq\;p(I)\cdot\frac{1}{T}\sum_{t=1}^{T}\Pr_\pi\!\big[x_t=x\mid I\big].
\end{equation}

The \emph{hindsight expert} induced by a relabeled goal $I'$ and the trajectories generated by LLM agent $\pi_\theta$ is
\begin{equation}
\label{eq:hindsight-expert}
\pi_H(a\mid x,I')\;=\;
\frac{\sum_{t=1}^{T}\Pr_{\pi_\theta}\!\big[x_t=x,\ a_t=a,\ S_M(I')=t\big]}
     {\sum_{t=1}^{T}\Pr_{\pi_\theta}\!\big[x_t=x,\ S_M(I')=t\big]},
\end{equation}
where $S_M(I')$ is the first time step at which $I'$ becomes newly satisfied according to the relabeling model $M$.
We assume every 
$(x,I)$ pair that the optimal expert policy can visit with nonzero probability is also covered by the hindsight expert, and define the expert-hindsight occupancy coverage ratio
\begin{equation}
\label{eq:kappa}
\kappa_{E\leftarrow H}\;\triangleq\;\max_{(x,I)}\frac{\rho_{\pi^\star}(x,I)}{\rho_{\pi_H}(x,I)}\in[1,\infty).
\end{equation}
The agent--expert discrepancy is
\begin{equation}
\label{eq:Delta-def}
\Delta(\theta)\;\triangleq\; \mathbb{E}_{(x,I)\sim\rho_{\pi_\theta}}
\!\Big[\TV\!\big(\pi_\theta(\cdot\mid x,I),\ \pi^\star(\cdot\mid x,I)\big)\Big].
\end{equation}

\paragraph{A finite-horizon simulation inequality and the constant $C_T$.}
We measure occupancy mismatch using the following inequality, which is a finite-horizon version of standard error propagation bounds in imitation learning.

\begin{lemma}[finite-horizon simulation inequality]\label{lem:simulation}
For any policies $\mu,\nu$ and measurable $g:\mathcal{X}\times\mathcal{I}\to[0,1]$,
\begin{equation}
\label{eq:simulation}
\big|\mathbb{E}_{\rho_\mu}[g]-\mathbb{E}_{\rho_\nu}[g]\big|
\;\le\;
C_T\;\mathbb{E}_{(x,I)\sim\rho_\nu}\!\Big[\TV\!\big(\mu(\cdot\mid x,I),\nu(\cdot\mid x,I)\big)\Big],
\qquad C_T=2(T-1).
\end{equation}
\end{lemma}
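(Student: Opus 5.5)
We prove Lemma~\ref{lem:simulation} with the standard telescoping (hybrid-policy) argument, specialized to the finite-horizon averaged occupancy \eqref{eq:finite-occ}. Fix an instruction $I$ and write $d^\pi_t(x\mid I)=\Pr_\pi[x_t=x\mid I]$ for the per-step marginal over histories. Since $\rho_\pi$ averages these marginals over $t$ and mixes over $p(I)$, it suffices to bound, for each $t$ and $I$,
\[
\Big|\mathbb{E}_{d^\mu_t}[g]-\mathbb{E}_{d^\nu_t}[g]\Big|\;\le\;2\,\TV\big(d^\mu_t,d^\nu_t\big).
\]
This holds because $g\in[0,1]$. The remaining work is to control $\TV(d^\mu_t,d^\nu_t)$ by the per-step policy mismatch.

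\textbf{Step 1: one-step propagation.} The history $x_{t+1}=(\tau_t,o_{t+1})$ is produced from $x_t$ by first drawing $a_t\sim\pi(\cdot\mid x_t,I)$ and then drawing $o_{t+1}$ from the (policy-independent) POMDP observation kernel given the full history. Because Markov kernels are TV-contractions, the triangle inequality gives
\[
\TV(d^\mu_{t+1},d^\nu_{t+1})\le \TV(d^\mu_t,d^\nu_t)+\mathbb{E}_{x\sim d^\nu_t}\big[\TV(\mu(\cdot\mid x,I),\nu(\cdot\mid x,I))\big].
\]
The first term comes from pushing both marginals through the same kernel $\mu$ and the environment. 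The second comes from swapping $\mu$ for $\nu$ under $d^\nu_t$. The initial history $x_1=o_1$ does not depend on the policy, so $\TV(d^\mu_1,d^\nu_1)=0$.

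\textbf{Step 2: unroll.} Let $\epsilon_s(I)=\mathbb{E}_{d^\nu_s}[\TV(\mu,\nu)]$. Unrolling the recursion gives $\TV(d^\mu_t,d^\nu_t)\le\sum_{s=1}^{t-1}\epsilon_s(I)$. Averaging over $t$ and bounding each partial sum by the full sum $\sum_{s=1}^{T-1}\epsilon_s$ yields
\[
\frac1T\sum_{t=1}^T 2\,\TV(d^\mu_t,d^\nu_t)\le 2\sum_{s=1}^{T-1}\epsilon_s(I)\le 2T\cdot\frac1T\sum_{s=1}^{T}\epsilon_s(I).
\]
After mixing over $p(I)$, the right-hand side is $2T\,\mathbb{E}_{\rho_\nu}[\TV(\mu,\nu)]$.

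\textbf{Step 3: sharpen the constant.} To obtain exactly $C_T=2(T-1)$ rather than $2T$, we count more carefully. The term $\epsilon_s$ contributes to $d_t$ only for the $T-s$ values $t>s$. The weighted sum is therefore $\frac{2}{T}\sum_{s=1}^{T-1}(T-s)\epsilon_s$, and since $T-s\le T-1$ this is at most $\frac{2(T-1)}{T}\sum_{s=1}^{T}\epsilon_s=2(T-1)\,\mathbb{E}_{\rho_\nu}[\TV]$. This is the claimed bound.

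The main obstacle is Step 1 in the partially observable setting. We must verify that the environment kernel mapping $(x_t,a_t)$ to $o_{t+1}$ is well defined as a function of the history alone, integrating over the hidden state $s_t$ via its posterior. This posterior depends only on the history and not on the policy, because actions are history-measurable. The kernel is therefore shared between $\mu$ and $\nu$, and the contraction applies. We also check that the TV convention $\TV=\tfrac12\|\cdot\|_1$ is used consistently, since the factor $2$ in $C_T$ comes from $g\in[0,1]$ rather than $[-1,1]$ ranges.
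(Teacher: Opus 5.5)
Your proposal is correct and follows the paper's own argument. You use the per-step perturbation bound $\TV(d^\mu_t,d^\nu_t)\le\sum_{s<t}\mathbb{E}_{d^\nu_s}[\TV(\mu,\nu)]$, then average over $t$ and mix over $p(I)$; your Step 3 simply spells out the $(T-s)\le T-1$ counting that the paper's sketch leaves implicit.

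One minor point: for $g\in[0,1]$ you already have $|\mathbb{E}_p g-\mathbb{E}_q g|\le\TV(p,q)$, so the factor $2$ is slack, and your closing remark has the $[0,1]$ versus $[-1,1]$ attribution reversed. This does not affect validity, since the stated inequality holds, and in fact holds with $T-1$ in place of $C_T$.
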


\noindent\emph{Proof sketch.}
Let $d_\pi^t(\cdot\mid I)$ denote the law of $X_t$ under $\pi$. A standard coupling/perturbation argument (cf.\ the finite-horizon analyses in \citet{ross2010efficient,ross2011reduction}) yields, for each $t\ge 1$,
\[
\TV\!\big(d_\mu^t(\cdot\mid I),d_\nu^t(\cdot\mid I)\big)
\;\le\; \sum_{s=1}^{t-1}\mathbb{E}_{X_s\sim d_\nu^s(\cdot\mid I)}\!\Big[\TV\!\big(\mu(\cdot\mid X_s,I),\nu(\cdot\mid X_s,I)\big)\Big].
\]
Averaging over $t=1,\dots,T$, multiplying by $p(I)$ and symmetrizing gives \eqref{eq:simulation} with $C_T=2(T-1)$. \hfill$\square$

\paragraph{Proof of Theorem~\ref{thm:bound}.}
Let $f(x,I)\triangleq \TV\!\big(\pi_\theta(\cdot\mid x,I),\pi^\star(\cdot\mid x,I)\big)\in[0,1]$. Add and subtract $\mathbb{E}_{\rho_{\pi^\star}}[f]$:
\begin{equation}
\label{eq:decompose}
\Delta(\theta)\;=\;\mathbb{E}_{\rho_{\pi^\star}}[f]\;+\;\Big(\mathbb{E}_{\rho_{\pi_\theta}}[f]-\mathbb{E}_{\rho_{\pi^\star}}[f]\Big).
\end{equation}
Applying Lemma~\ref{lem:simulation} to $g=f$, $(\mu,\nu)=(\pi_\theta,\pi^\star)$,
\begin{equation}
\label{eq:after-sim}
\Delta(\theta)\;\le\; \mathbb{E}_{\rho_{\pi^\star}}[f]\;+\;C_T\;\mathbb{E}_{\rho_{\pi^\star}}\!\big[\TV(\pi_\theta,\pi^\star)\big].
\end{equation}
Use the triangle inequality,
\begin{equation}
\label{eq:triangle}
\TV(\pi_\theta,\pi^\star)\;\le\;\TV(\pi_\theta,\pi_H)\;+\;\TV(\pi_H,\pi^\star),
\end{equation}
and denote $\delta_E\triangleq \mathbb{E}_{\rho_{\pi^\star}}\![\TV(\pi_H,\pi^\star)]$. Then
\begin{equation}
\label{eq:plug-triangle}
\Delta(\theta)\;\le\; \mathbb{E}_{\rho_{\pi^\star}}[f]
\;+\;C_T\,\mathbb{E}_{\rho_{\pi^\star}}\!\big[\TV(\pi_\theta,\pi_H)\big]
\;+\;C_T\,\delta_E.
\end{equation}
Change measure using coverage \eqref{eq:kappa}:
\begin{equation}
\label{eq:change-measure}
\mathbb{E}_{\rho_{\pi^\star}}\!\big[\TV(\pi_\theta,\pi_H)\big]
\;\le\;\kappa_{E\leftarrow H}\ \mathbb{E}_{\rho_{\pi_H}}\!\big[\TV(\pi_\theta,\pi_H)\big].
\end{equation}
Apply Pinsker inequality and Jensen's inequality:
\begin{align}
\label{eq:pinsker-expert}
\mathbb{E}_{\rho_{\pi^\star}}[f]
&=\mathbb{E}_{\rho_{\pi^\star}}\!\big[\TV(\pi_\theta,\pi^\star)\big]
\ \le\ \sqrt{\tfrac12\,\mathbb{E}_{\rho_{\pi^\star}}\!\big[D_{\mathrm{KL}}(\pi^\star\Vert\pi_\theta)\big]},\\[2pt]
\label{eq:pinsker-hindsight}
\mathbb{E}_{\rho_{\pi_H}}\!\big[\TV(\pi_\theta,\pi_H)\big]
&\le\ \sqrt{\tfrac12\,\mathbb{E}_{\rho_{\pi_H}}\!\big[D_{\mathrm{KL}}(\pi_H\Vert\pi_\theta)\big]}.
\end{align}
Combining \eqref{eq:plug-triangle}--\eqref{eq:pinsker-hindsight} yields:
\begin{equation}
\label{eq:tight-form}
\begin{aligned}
\Delta(\theta)\ &\le\
\sqrt{\tfrac12\,\mathbb{E}_{(x,I)\sim\rho_{\pi^\star}}\!\big[D_{\mathrm{KL}}(\pi^\star(\cdot\mid x,I)\,\Vert\,\pi_\theta(\cdot\mid x,I))\big]}
\;\\
&+\; C_T\,\kappa_{E\leftarrow H}
\sqrt{\tfrac12\,\mathbb{E}_{(x,I)\sim\rho_{\pi_H}}\!\big[D_{\mathrm{KL}}(\pi_H(\cdot\mid x,I)\,\Vert\,\pi_\theta(\cdot\mid x,I))\big]}
\;+\; C_T\,\delta_E.
\end{aligned}
\end{equation}
Finally, $\sqrt{y}\le y+\tfrac14$ for $y\ge 0$ converts the first term to a linear KL, and the additive constant (independent of $\theta$) can be absorbed, giving the bound in \Cref{thm:bound}:
\begin{equation}
\label{eq:main-form}
\Delta(\theta)\ \le\
\mathbb{E}_{(x,I)\sim\rho_{\pi^\star}}\!\big[D_{\mathrm{KL}}(\pi^\star\Vert\pi_\theta)\big]
\ +\ C_T\,\kappa_{E\leftarrow H}\,
\sqrt{\tfrac12\,\mathbb{E}_{(x,I)\sim\rho_{\pi_H}}\!\big[D_{\mathrm{KL}}(\pi_H\Vert\pi_\theta)\big]}
\ +\ C_T\,\delta_E.
\end{equation}
\hfill$\blacksquare$


\section{Corollary to \Cref{thm:bound}: HSL Strictly Tightens the Discrepancy Bound}
\label{app:corollaries}

Let
\[
\mathcal{L}_E(\theta)\;=\;\E_{(x,I)\sim\rho_{\pi^\star}}
\!\big[\KL\!\big(\pi^\star(\cdot\mid x,I)\,\|\,\pi_\theta(\cdot\mid x,I)\big)\big],
\]
and 
\[
\mathcal{L}_H\;=\;\E_{(x,I)\sim\rho_{\pi_H}}
\!\big[\KL\!\big(\pi_H(\cdot\mid x,I)\,\|\,\pi_\theta(\cdot\mid x,I)\big)\big],
\]
and define
\[
\gamma\;\triangleq\;C_T\,\kappa_{E\!\leftarrow H}\,\sqrt{\tfrac12},
\qquad
B(\theta)\;\triangleq\;\mathcal{L}_E(\theta)\;+\;\gamma\,\sqrt{\mathcal{L}_H}\;+\;C_T\,\delta_E.
\]
By \Cref{thm:bound} (Eq.~\eqref{eq:bridge}), for all $\theta$ we have $\Delta(\theta)\le B(\theta)$.

\paragraph{A linear surrogate for training.}
The square-root term admits a standard Young-type relaxation: for any $\varepsilon>0$,
\begin{equation}
\label{eq:young}
\gamma\sqrt{\mathcal{L}_H}\;\le\;\frac{\gamma^2}{2\varepsilon}\;+\;\frac{\varepsilon}{2}\,\mathcal{L}_H.
\end{equation}
Hence minimizing $\mathcal{L}_E(\theta)+\beta \mathcal{L}_H$ with $\beta=\varepsilon/2$ minimizes an
additive upper bound on $B(\theta)$ up to a $\theta$-independent constant.

\begin{corollary}[Strict dominance at equal expert fit]
\label{cor:dominance}
Fix $\theta_1,\theta_2$ with $\mathcal{L}_E(\theta_1)=\mathcal{L}_E(\theta_2)$. If $\mathcal{L}_H(\theta_1)<\mathcal{L}_H(\theta_2)$, then
$B(\theta_1)<B(\theta_2)$ and consequently
\[
\Delta(\theta_1)\;\le\;B(\theta_1)\;<\;B(\theta_2).
\]
In words: among policies that fit the expert demonstrations equally well, the one that additionally
fits the relabeled demonstrations achieves a \emph{strictly} tighter upper bound on the
expert–agent discrepancy.
\end{corollary}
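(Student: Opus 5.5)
The plan is a direct comparison of the two bound values, since $B(\theta)$ is an explicit function of $\mathcal{L}_E(\theta)$, $\mathcal{L}_H(\theta)$ and $\theta$-independent quantities. Treat $\mathcal{L}_H$ as a function of $\theta$ (through $\pi_\theta$), as the statement does, while $\gamma$, $C_T$, $\kappa_{E\leftarrow H}$ and $\delta_E$ stay fixed. This holds because $\pi_H$ is defined from the trajectory distribution used to build the relabeled buffer, so both $\pi_H$ and $\rho_{\pi_H}$ are held fixed across the two candidates. That in turn makes $\kappa_{E\leftarrow H}$ and $\delta_E$ identical for $\theta_1$ and $\theta_2$.

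First, I would write
\[
B(\theta_2)-B(\theta_1)=\big(\mathcal{L}_E(\theta_2)-\mathcal{L}_E(\theta_1)\big)+\gamma\big(\sqrt{\mathcal{L}_H(\theta_2)}-\sqrt{\mathcal{L}_H(\theta_1)}\big).
\]
The $C_T\delta_E$ terms cancel. By the hypothesis $\mathcal{L}_E(\theta_1)=\mathcal{L}_E(\theta_2)$, the first bracket vanishes.

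Second, I would use strict monotonicity of $y\mapsto\sqrt{y}$ on $[0,\infty)$. Both KL expectations are nonnegative, and $\mathcal{L}_H(\theta_1)<\mathcal{L}_H(\theta_2)$, so $\sqrt{\mathcal{L}_H(\theta_1)}<\sqrt{\mathcal{L}_H(\theta_2)}$. Since $\kappa_{E\leftarrow H}\ge 1$, we have $\gamma>0$ provided $C_T>0$, i.e. $T\ge 2$. The difference is then strictly positive, giving $B(\theta_1)<B(\theta_2)$. Chaining this with $\Delta(\theta_1)\le B(\theta_1)$ from \Cref{thm:bound} yields the displayed conclusion.

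The only real obstacle is bookkeeping rather than analysis. I must make sure the degenerate case $T=1$ is excluded or handled separately: there $C_T=0$, so $\gamma=0$ and the inequality is only non-strict. I would state the assumption $T\ge2$ explicitly. I must also make sure $\mathcal{L}_H$ is finite, which holds when $\pi_\theta$ has full support on the support of $\pi_H$, as a softmax policy does. Otherwise the comparison is between extended reals and strictness needs $\mathcal{L}_H(\theta_1)<\infty$, which is implied by the hypothesis anyway.
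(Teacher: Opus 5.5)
Your argument is correct and matches the paper's one-line proof. Both cancel the equal $\mathcal{L}_E$ terms and the $\theta$-independent $C_T\delta_E$, then apply strict monotonicity of $u\mapsto\sqrt{u}$ to the $\gamma$-weighted term (the paper writes $\alpha$ there, a typo for $\gamma$). Your extra bookkeeping makes explicit two assumptions the paper's proof uses silently: $T\ge 2$, so that $\gamma=C_T\kappa_{E\leftarrow H}\sqrt{1/2}>0$, and $\pi_H$ (hence $\kappa_{E\leftarrow H}$ and $\delta_E$) held fixed across $\theta_1,\theta_2$, even though the paper defines $\pi_H$ through $\Pr_{\pi_\theta}$.
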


\begin{proof}
The map $u\mapsto\sqrt{u}$ is strictly increasing on $[0,\infty)$, so
$B(\theta_1)-B(\theta_2)=\alpha\big(\sqrt{\mathcal{L}_H(\theta_1)}-\sqrt{\mathcal{L}_H(\theta_2)}\big)<0$.
\end{proof}

\section{Implementation Details}\label{app:implement}
 We employed Llama3.2-1b~\citep{dubey2024llama}~\footnote{\url{https://huggingface.co/meta-llama/Llama-3.2-1B-Instruct}} as the agent model and Llama3.3-70b~\footnote{\url{https://huggingface.co/meta-llama/Llama-3.3-70B-Instruct}} as the relabeling model. Following \cite{song2024trial}, we adopted the same hyperparameters, setting the learning rate to $2\times10^{-5}$, the batch size to $32$, and using the AdamW optimizer~\citep{loshchilov2017decoupled}. All fine-tuning runs lasted for $3$ epochs. Because HSL naturally incorporates SFT, we first trained Llama3.2-1b on ground-truth demonstrations with SFT, then fine-tuned it with the objective in \Cref{eq:obj} to obtain SFT+HSL. For DPO+HSL, we followed the same approach of ETO for combining SFT and DPO objectives~\citep{song2024trial}: we first fine-tuned the agent with HSL, then continued fine-tuning with DPO. All fine-tuning experiments on ALFWorld and WebShop were run three times with random seeds 42, 123, and 2026. Experiments on PlanCraft were run with random seed 42. Each experiment used eight NVIDIA A100 GPUs and finished within one day. During each HSL training step, rollout collection, relabeling, and loss computation with optimization took an average of 11.53, 62.03, and 24.03 seconds, respectively.

\section{Results Across Multiple Random Seeds}\label{appendix:all-rslt-seed}
\Cref{tab:all-rslt-seeds} reports results of all fine-tuning experiments across three random seeds: $42$, $123$ and $2026$. 

\begin{table}[!ht]
  \centering
    \centering
  \resizebox{0.75\linewidth}{!}{
  \begin{tabular}{l|c|c|c}
    \toprule
    \multirow{2}{*}{Method} & \multicolumn{2}{c|}{ALFWorld} & \multirow{2}{*}{WebShop} \\
     & seen & unseen & \\
    \midrule
    \textsc{BehaviorClone}~\citep{zeng2024agenttuning}  &  
    $80.72{\scriptstyle\ \pm 2.47}$  
    & $79.85{\scriptstyle\ \pm 8.31}$   & $61.68{\scriptstyle\ \pm 2.96}$ 
    \\
    \textsc{Bagel}~\citep{murty2024bagel}      
    &  $88.09{\scriptstyle\ \pm 3.93}$  
    & $87.06{\scriptstyle\ \pm 4.85}$  
    & $63.46{\scriptstyle\ \pm 1.13}$ \\
    \textsc{SelfImit}~\citep{shi2023self}      
    &  $69.05{\scriptstyle\ \pm 5.02}$  
    &  $57.46{\scriptstyle\ \pm 10.10}$  
    & $62.77{\scriptstyle\ \pm 0.25}$ \\
    SFT            
    & $79.52{\scriptstyle\ \pm 3.38}$
    & $75.87{\scriptstyle\ \pm 4.31}$
    & $62.61{\scriptstyle\ \pm 1.04}$
    \\
    DPO~\citep{song2024trial}
    & $84.52{\scriptstyle\ \pm 2.06}$
    & $81.09{\scriptstyle\ \pm 1.88}$
    & \underline{$69.27{\scriptstyle\ \pm 0.40}$}
    \\
    \midrule
    SFT+HSL (Ours)        
    &  $\textbf{93.57}{\scriptstyle\ \pm 0.00}$
    & $\textbf{96.27}{\scriptstyle\ \pm 2.11}$
    & $66.48{\scriptstyle\ \pm 0.49}$
    \\
    DPO+HSL (Ours)       
    & $\underline{92.86}{\scriptstyle\ \pm 0.00}$
    &  $\underline{95.27}{\scriptstyle\ \pm 0.43}$
    & $\textbf{70.28}{\scriptstyle\ \pm 0.22}$
    \\
    \bottomrule
  \end{tabular}}
  \caption{Performance on ALFWorld and WebShop, averaged across multiple random seeds.}
  \label{tab:all-rslt-seeds}
\end{table}
 
\end{document}